\newcommand{\Mora}{{\tt Mora}}
\newcommand{\R}{{\mathbb{R}}}
\newcommand{\N}{{\mathbb{N}}}
\newcommand{\Normal}{{\mathcal{N}}}
\newcommand{\Unif}{{\mathcal{U}}}
\newcommand{\Gaussian}{{\mathcal{G}}}
\newcommand{\E}{{\mathbb{E}}}
\def\BState{\State\hskip-\ALG@thistlm}
\newcommandx{\unsure}[2][1=]{\todo[linecolor=red,backgroundcolor=red!25,bordercolor=red,#1]{#2}}
\newcommandx{\change}[2][1=]{\todo[linecolor=blue,backgroundcolor=blue!25,bordercolor=blue,#1]{#2}}
\newcommandx{\info}[2][1=]{\todo[linecolor=OliveGreen,backgroundcolor=OliveGreen!25,bordercolor=OliveGreen,#1]{#2}}
\newcommandx{\improvement}[2][1=]{\todo[linecolor=Plum,backgroundcolor=Plum!25,bordercolor=Plum,#1]{#2}}
\newcommandx{\thiswillnotshow}[2][1=]{\todo[disable,#1]{#2}}
\newcommand{\ProbModel}{Prob-solvable}
\newcommand{\nin}{\not\in}
\begin{document}

\title{Analysis of Bayesian Networks via Prob-Solvable Loops}
\titlerunning{ }%Moment-Based Invariants for Probabilistic P-Solvable
% Loops}
\authorrunning{ }
\author{ }
% If the paper title is too long for the running head, you can set
% an abbreviated paper title here
%
\author{Ezio Bartocci \and
Laura Kov{\'{a}}cs \and
Miroslav Stankovi\v{c}}
%Third Author\inst{3}\orcidID{}}
%
%\authorrunning{F. Author et al.}
% First names are abbreviated in the running head.
% If there are more than two authors, 'et al.' is used.
%
%\institute{ }
\institute{TU Wien, Austria%\\
%\email{name.surname@tuwien.ac.at}
}
%\url{}}

%\url{http://www.springer.com/gp/computer-science/lncs} \and
%ABC Institute, Rupert-Karls-University Heidelberg, Heidelberg, Germany\\
%\email{\{abc,lncs\}@uni-heidelberg.de}}
%
\maketitle              
%

%!TEX root = main.tex

\begin{abstract}

Prob-solvable loops are probabilistic programs with polynomial 
assignments over random variables and parametrised distributions, for which the full automation of 
moment-based invariant generation is decidable. In this paper we
extend Prob-solvable loops with new features essential for encoding
Bayesian networks (BNs). %, which are  graphical probabilistic models used  in machine 
%learning and AI.
We show that various BNs,  such as 
discrete, Gaussian, conditional linear Gaussian and dynamic BNs, can be
naturally  encoded as Prob-solvable loops. Thanks to these
encodings, we can
automatically solve several BN related problems, including exact inference, sensitivity analysis, filtering and computing the expected  
number of rejecting samples in sampling-based procedures. We evaluate
our work on a number of BN benchmarks, using automated invariant
generation within Prob-solvable loop analysis.
\end{abstract}

\let\thefootnote\relax\footnotetext{
This research was supported by the Vienna Science and Technology Fund (WWTF)
under grant ICT19-018 (ProbInG), the 
ERC Starting Grant 2014 SYMCAR 639270 
and the Austrian FWF project W1255-N23.}

%!TEX root = main.tex

\section{Introduction}\label{sec:intro}

Bayesian networks (BNs) are well-established probabilistic models  
widely adopted to represent complex systems and to reason about their intrinsic 
uncertain knowledge. BNs are graphically depicted as directed 
acyclic graphs (DAGs) whose nodes represent random variables and edges
 capture conditional dependencies. 
%Disconnected nodes represent variables that are conditionally 
%independent. 
Since the seminal work of~\cite{Pearl1985329},
BNs have been extensively employed in several application domains
including machine learning~\cite{Heckerman08}, speech recognition~\cite{ZweigR98}, 
sports betting~\cite{Constantinou2012}, gene regulatory networks~\cite{Friedman2000}, 
diagnosis of diseases~\cite{Jiang2010} and finance~\cite{Neapolitan2010}.
Part of their success is due to the inherited Bayesian inference framework 
enabling the prediction about the likelihood that one of several known causes 
is responsible for the evidence of an observed event.

\begin{figure}[t]
  \includegraphics[width=\linewidth]{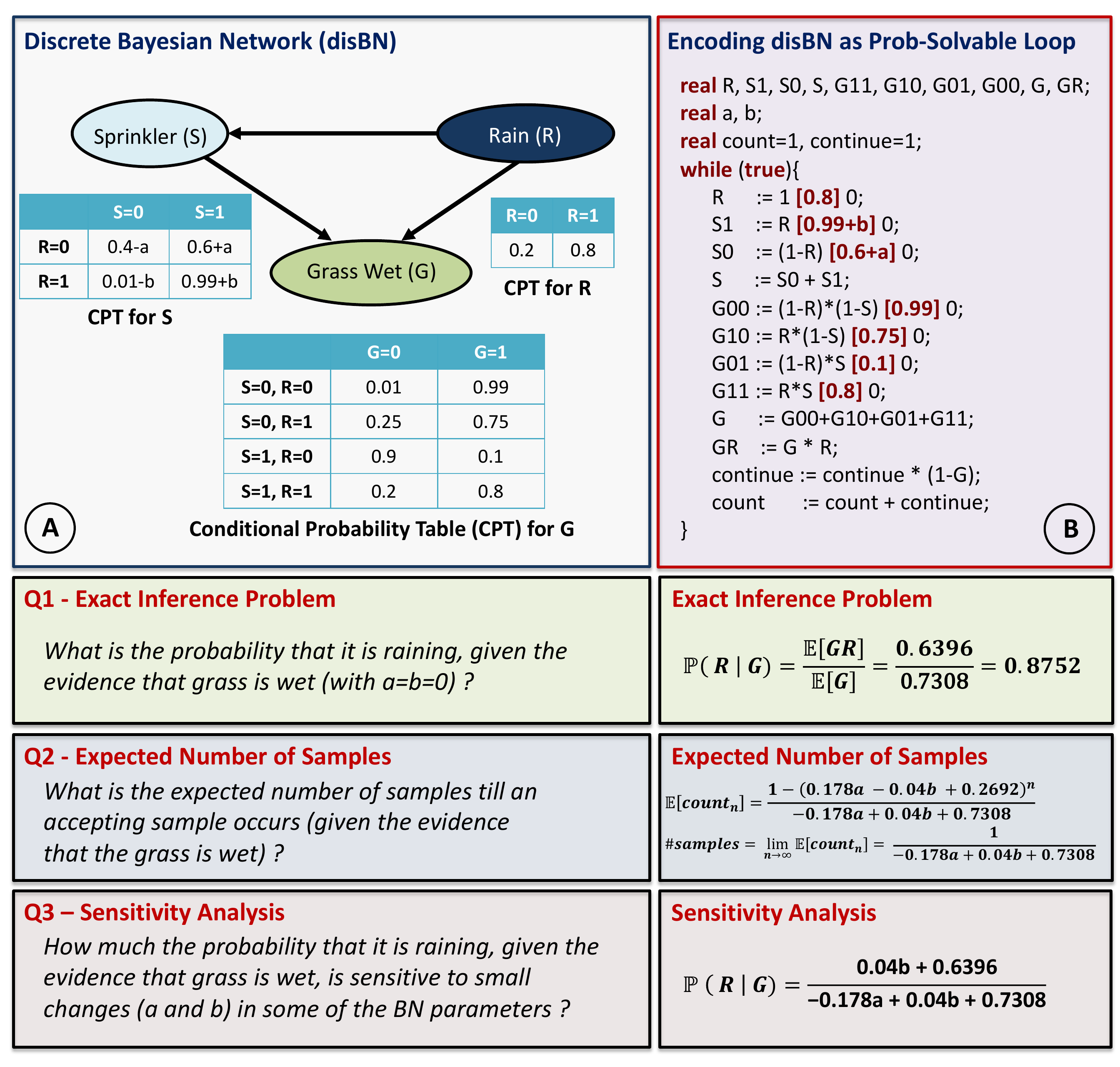}
  \vspace{-5ex}
  \caption{Solving probabilistic inference, the expected number of samples and the 
  sensitivity analysis for a discrete BN (disBN), by encoding the
    disBN as a \ProbModel{} loop 
  and computing automatically moment-based invariants (MBIs).}
  \vspace{-4ex}
  \label{fig:grass}
\end{figure}

Fig.~\ref{fig:grass} illustrates a simple BN with two events that can cause
the grass (G) to be wet: the rain (R)
or an active sprinkler (S). When it rains the sprinkler is usually not active, so the rain has a direct effect on the use 
of the sprinkler.  This dependency is provided by a \emph{conditional probability  
table}, in short CPT, associated to the sprinkler random variable
$S$.
A CPT lists, for each possible combination of values of the parents' 
variables (one for each row of the table), the corresponding probability  for the child's variable to have 
a certain discrete value (one for each column of the table). The
random variables $G, R, S$ of Fig.~\ref{fig:grass} are, for example, binary
random variables with Bernoulli conditional distributions. 
However, in general  BNs allow arbitrary types for their random
variables and their conditional distributions. % based on  its parents, to be  to assume any form (See Section~\ref{pgms}).
\vspace{-1ex}
\noindent \paragraph{Probabilistic inference.} Given the BN in
Fig.~\ref{fig:grass}, the following can be asked: 
\vspace{-1ex}
$$\mbox{\emph{Q1 - What is the probability that it is raining, given that the grass is wet?}}$$
The answer to this question can be obtained by solving a
\emph{probabilistic inference}, that is the  
problem to optimally estimating the probability of an event given an 
observed evidence. %This problem has been extensively investigated in the literature~\cite{Koller2009}
The works in~\cite{Cooper90,DagumL93} show that both \emph{exact} and \emph{approximated} 
(up to an arbitrary precision) methods to solve probabilistic inference are NP-hard.
\vspace{-1ex}
\noindent \paragraph{How many samples?} Approximating solutions for 
probabilistic inferences can be done using Monte Carlo sampling techniques~\cite{Koller2009,Yuan2006}. For example, 
\emph{rejection sampling} is one of the fundamental techniques for sampling from the joint 
(unconditional) distribution of the BN: a sample is accepted when it complies
with the evidence, otherwise is rejected. Unfortunately, this method may require many samples before 
obtaining the first accepted samples, while most of the samples may be wasted simply because they 
do not satisfy the observations. Thus, an interesting question, investigated also in~\cite{howlong}, is:
\vspace{-1ex}
$$\mbox{\emph{Q2 - What is the expected number of samples until an accepting sample occurs?}}$$
\vspace{-5ex}
%As observed in~\cite{howlong} we can divide the sampling 
%techniques for BNs in two categories: Markov Chain Monte Carlo (MCMC) methods and 
% importance sampling ones. MCMC generally requires heavy hand-tuning and suffers from slow 
%convergence rates on real-world instances~\cite{Koller2009}. Importance sampling are 
\noindent \paragraph{Sensitivity analysis.} As BN parameters are often
provided manually or estimated from (incomplete) data, they are most likely to be 
imprecise or wrong. For example, in Fig.~\ref{fig:grass} the CPT 
of the random variable $S$ contains imprecise symbolic parameters $a$ and $b$.
In this case, \emph{sensitivity analysis} aims to answer the following question: 
\vspace{-1ex}
$$\mbox{\emph{Q3 - How much does a small change in BN parameters 
affects probabilistic inference?}}$$
\vspace{-3ex}
\begin{figure}[t]
  \includegraphics[width=\linewidth]{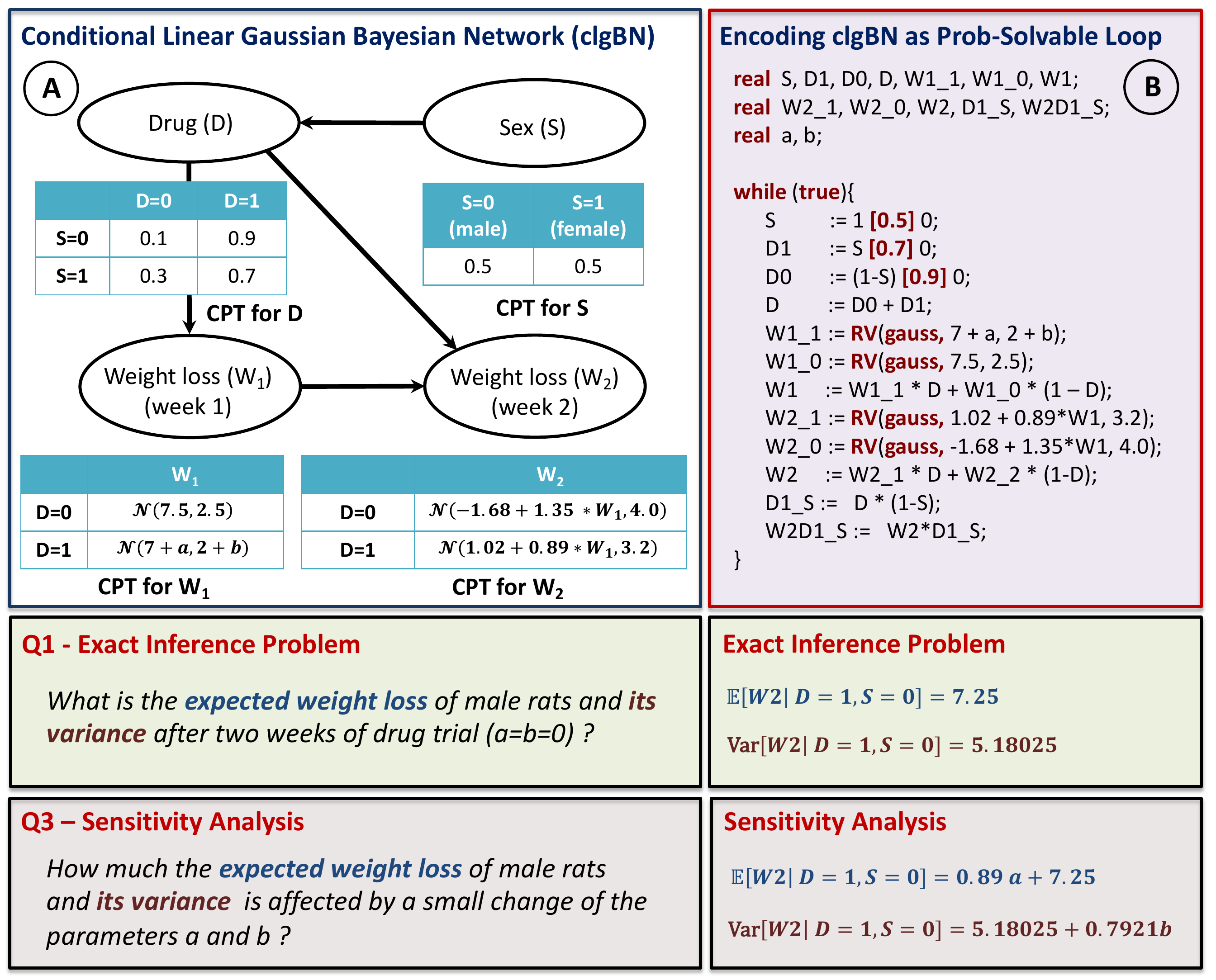}
  \caption{Solving probabilistic inference and sensitivity
    analysis in a conditional linear Gaussian BN (clgBN), by encoding the
    clgBN  as a \ProbModel{} loop 
  and  computing  MBIs.}
  \vspace{-3ex}
  \label{fig:rats}
\end{figure}
\vspace{-3ex}
\noindent \paragraph{Probabilistic Programs.} Probabilistic programs
(PPs) provide  
a unifying framework to both encode probabilistic graphical models, such as BNs, and to
implement sophisticated inference algorithms and decision making routines that can operate 
in real-world applications~\cite{Ghahramani15}. Probabilistic
programming languages, such as~\cite{TranHSBMB17,BinghamCJOPKSSH19,AiADGJKKTT19}  include native constructs 
for sampling distribution, enabling the programmer to mix
deterministic and stochastic elements. 
%Examples of modern probabilistic programming languages are Anglican~\cite{WoodMM14}, Church~\cite{GoodmanMRBT08}, 
%Figaro~\cite{Pfeffer2009}, R2~\cite{NoriHRS14},  WebPPL~\cite{dippl}, Edward~\cite{TranHSBMB17}, 
%Pyro~\cite{BinghamCJOPKSSH19} and HackPPL~\cite{AiADGJKKTT19}.
%These  
%languages are suitable for rapid prototyping and are equipped with built-in inference methods. 
However, the automated analysis of PPs implemented in these languages
is still at its infancy.
For example, one of the main challenges in the analysis of PPs comes
with computing invariant properties summarizing PP loops.
While full automation of invariant generation for PPs is in general
undecidable, recent results identify classes of PPs for which
invariants can automatically be computed~\cite{howlong,probsolvable}.
In~\cite{probsolvable}, 
we introduced a method to automatically generate moment-based
invariants of so-called \ProbModel{} loops with polynomial assignments over random variables 
 and parametrised distributions. %This approach, implemented in \textsc{Mora}~\cite{mora} tool, 
 Doing so, we exploit statistical properties to eliminate probabilistic choices and turn random updates into recurrence 
 relations over higher-order moments of program variables. 
 
   \vspace{-2ex}
 \noindent \paragraph{Analysis of  BNs as \ProbModel{} Loops.} In this
 paper we extend \ProbModel{} loops with new features essential for
 encoding BNs and for solving several kind of BN analysis
 via invariant generation over higher-order statistical moments  of
 \ProbModel{} loop variables.
 Fig.~\ref{fig:grass}(B) shows a \ProbModel{} loop encoding the probabilistic behaviour  
 of the discrete BN (disBN) illustrated in Fig.~\ref{fig:grass}(A). 
 The \ProbModel{} loop of Fig.~\ref{fig:grass}(B) requires  one
 variable for each disBN node, 
 one variable for each row of the CPT tables, one variable for each unknown parameter and 
  some extra variables that depend on the particular BN analysis. 
  For example, to solve exact probabilistic inference  and sensitivity
  analysis,  we require 
  an extra variable to store the product of the random variables $G$
  and $R$. On the other hand, to compute 
  the expected number of samples until an accepting sample occurs, we would need other two 
  auxiliary variables $count$ and $continue$. Each row of each CPT is encoded as a probabilistic 
  assignment in the \ProbModel{} loop. Our approach generates
  moment-based invariants as quantitative
  invariants over higher-order moments %moment-based quantitative 
 % invariants %generated with our \textsc{Mora} tool~\cite{BartocciKS20}
  to solve the three questions (Q1-Q3) of Fig.~\ref{fig:grass}. The
  required \ProbModel{} loop analysis requires however 
  additional  
  steps (e.g., calculating a limits) that are not yet supported
  in~\cite{probsolvable}. %natively in \textsc{Mora}~\cite{probsolvable,BartocciKS20}.
  Moreover, while the \ProbModel{} programming model
  of~\cite{probsolvable} can model the probabilistic 
  behavior of disBNs, it cannot model other BN variants, 
  such as BNs with Gaussian conditional dependencies as in
  Fig.~\ref{fig:rats}(A).  We therefore extend \ProbModel{}
  loops with new features supporting % support  particular, the 
  %theory developed in~\cite{probsolvable} does not support in the
  %language
  Gaussian and uniform   
  random variables depending on other random
  variables (Section~\ref{sec:probsolvable}) and show that these extensions allow us to solve BN
  problems via \ProbModel{} loop reasoning (Sections~\ref{sec:encoding}-\ref{sec:BN problems}).

%  In this paper, instead we extend the previous theory to automatically generate 
 % moment-based invariants also for such cases.
  
  %Each row of each CPT is then 
 %encoded as a probabilistic assignment in the probabilistic program.  Fig.~\ref{fig:grass} shows also how the 
 %three questions considered before can be automatically  answered using the quantitative 
 %invariants computed using \textsc{Mora}~\cite{BartocciKS20}. 
    \vspace{-2ex}
    \noindent \paragraph{Our contributions.} (i) We prove that our
    extended model of \ProbModel{} loops admits a decision procedure
  for computing moment-based invariants (Section~\ref{sec:probsolvable}. 
(ii) 
    We provide a sound encoding of BNs as \ProbModel{} loops, in
    particular addressing discrete BNs (disBNs), 
  Gaussian BNs (gBNs),  conditional linear Gaussian BNs (clgBNs) and
  dynamic BN (dynBNs) (Section~\ref{sec:encoding}.
  (iii) We formalize several BN problems as moment-based invariant
  generation tasks in \ProbModel{} loops (Section~\ref{sec:BN problems}).
  (iv) We implemented our
  approach in the 
  \textsc{Mora} tool~\cite{mora} and evaluated it on a number of examples, fully
  automating BN analysis via \ProbModel{} loop reasoning (Section~\ref{sec:implement}). 
% 
%  \noindent \paragraph{Main contribution.} To summarize, the main contributions of this paper are:
%  \begin{itemize}
%  \item We extend the \ProbModel~loops theory in~\cite{probsolvable}  to handle variable assignment with 
%  \emph{uniform} and \emph{Gaussian} distributions that depends on other random variables (see Section~\ref{}).
%  We provide a new algorithm to compute moment-based invariants for this class of probabilistic programs.
%  \item We prove the soundness of the encoding for several classes of BNs as \ProbModel\; loops extended with the new features.
% We consider discrete, Gaussian,  Conditional Linear Gaussian, Hybrid and Dynamic BNs. We provide 
% an algorithm that perform automatically such encoding (see Section~\ref{}).
%  \item We extend the capabilities of \textsc{Mora}~\cite{BartocciKS20} to enable new possible 
%  analysis over BNs encoded as \ProbModel\; loops. Examples of analysis includes exact inference, 
%  sensitivity analysis, filtering and computing the expected number of rejecting samples 
%  in sampling-based procedures (see Section~\ref{}).
%  \item We evaluate our work on a number of BN benchmark, showcasing the full automation of our 
%  approach based on Prob-solvable program analysis (see Section~\ref{}).  \end{itemize}

%!TEX root = main.tex

\section{Preliminaries}

%We recall basic mathematical concepts and various probabilistic models we will use throughout the paper.

%The key feature of probabilistic programs is the ability to draw values at random. 

%\subsection{Dealing with Probabilities}
We first introduce basic notions from statistics in order to reason about
probabilistic systems (Section~\ref{sec:prob}), and refer to~\cite{Lin92} for further
details. We then adopt basic definitions and properties of Bayesian
Networks (BNs) from~\cite{Pearl1985329} to our setting (Section~\ref{sec:BNs}). Throughout this paper, let $\N, \R$ denote
the set of natural and real numbers, respectively.

\subsection{Probability Space and Statistical Moments\label{sec:prob}}
We denote 
random variables by capital letters $X, Y, S, R, \ldots$ and program
variables by small letters $x, y, \ldots$, all possibly with indices.

\begin{definition}[Probability Space]
	A \emph{probability space} is a triple $(\Omega, F, P)$ consisting of
	a sample space $\Omega$ denoting the set of outcomes with
        $\Omega \not=\emptyset$, 
	 $F \subset 2^\Omega$ is a $\sigma$-algebra  denoting a set of
         events,  and  $P: F\rightarrow [0,1]$ is a probability measure
         with $P(\Omega)=1$.
	%\begin{itemize}
	%	\item a sample space $\Omega$ denoting the set of outcomes, where $\Omega \not=\emptyset$,
	%	\item a $\sigma$-algebra $F$ with  $F \subset
        %         2^\Omega$, denoting a set of events,
	%	\item a probability measure $P: F\rightarrow
        %         [0,1]$ such that $P(\Omega)=1$.
	%\end{itemize}
\end{definition}

We now define random variables, together with their higher-order
statistical moments,
in order to reason about probabilistic properties. 

\begin{definition}[Random Variable]
	A \emph{random variable} $X: \Omega \rightarrow \R$ is
        a measurable function from a set $\Omega$ of 
	possible outcomes (also called sample space) to $\R$. 
        If $\Omega$ is finite or countable, the random variable $X$ is
        called \emph{discrete}; otherwise, $X$ is 
        \emph{continuous}. 
\end{definition}

In particular, in this paper we will be interested in the following random
variables: 
\begin{itemize}
	\item Random variable $X$ with Bernoulli distribution, given
          by probability $p$, where $\Omega = \{0, 1\}$ and $X(0) = 1-p$
          and $X(1) = p$; 
	\item Random variable $X$ with Gaussian distribution $\Gaussian(\mu,\sigma^2)$, given 
	by mean $\mu$ and variance $\sigma^2$, where $\Omega = \R$ 
	and $X(z) = \frac {1}{\sigma {\sqrt {2\pi }}}e^{-{\frac {1}{2}}\left({\frac {x-\mu }{\sigma }}\right)^{2}}$.
         \item Random variable $X$ with uniform distribution $\Unif(a,b)$, given 
	by lower and upper limits $a, b$ such that $a<b$, where $\Omega = \R$ 
	and $X(z) = 
\begin{cases}
\frac{1}{b-a} 		& \textrm{for } z\in[a,b]		\\
0							& \textrm{for } z\not\in[a,b]
\end{cases}	$.
\end{itemize}

\begin{example}
  The variables $R, S, G$ of the BN from
Fig.~\ref{fig:grass}(A) 
 are Bernoulli random variables, with variable $R$ given by
 probability $0.8$.  
Fig.~\ref{fig:rats}(A)  features two Bernoulli random variables $S$ and $D$
as well as two real-valued random variables $W1$ and $W2$ drawn from
a Gaussian %normal
distribution.
Note that the parameters of  the Gaussian distribution of $W1$ depend
on the values of $D$, whereas for $W2$ it depends on $D$ and $W1$.  
% S, D discrete
% also W1 and W2 vars which are drawn fom normal. which normal depends on D  and, in the case of W2, also on W1
\end{example}

For a given random variable $X$ we will denote by $\Omega(X)$ the
sample space of $X$. 
When working with a random variable $X$, the most common statistical
moment of $X$ to consider is its first-order moment, called the 
expected value of $X$.% and other moments of $X$.

\begin{definition}[Expected Value]
An \emph{expected value of a random variable $X$} defined on a probability space $(\Omega, F, P)$ is 
the Lebesgue integral: $\E[X] = \int_\Omega X\cdot dP.$
In the special case \emph{when  $\Omega$ is discrete}, that is the
outcomes are ${X_1,\dots X_n}$ with corresponding 
probabilities ${p_1,\dots p_N}$ and $n\in\N$, we have $\E[X] = \sum_{i = 1}^n X_i\cdot p_i.$ 
The expected value of $X$ is often also referred to as the
\emph{mean} or $\mu$ of $X$.
\end{definition}

The key ingredient in analyzing and deriving properties of a random
variable $X$ is the so-called characteristic function of $X$.

\begin{definition}[Characteristic Function]
The \emph{characteristic function} of a random variable $X$, denoted
by $\phi_X(t)$, is the Fourier transform of its probability density
function (pdf). That is, $\phi(t) = \E[e^{itX}]$, with a bijective
relation between probability distributions and characteristic functions.
\end{definition}

The characteristic function $\phi_X(t)$ of a random variable $X$
captures %provides an alternative way of analyzing
the
value 
distribution induced by $X$. In particular, the 
characteristic function $\phi_X(t)$ of $X$ enables 
inferring properties about 
distributions given by weighted sums of $X$ and other random variables, and
thus also about 
statistical higher-order moments of $X$. 

%For program variables $x$ of \ProbModel{} loops, our work computes
%the expected values of the corresponding sequences $x(n)$ but also 
%higher-order and mixed moments.
%
\begin{definition}[Higher-Order Moments]
Let $X$ 
%and $Y$
be a random variable, $c\in\R$ and $k\in\N$. We write
$Mom_{k}[X]$ to denote the \emph{$k$th raw moment of $X$},
which is defined as:  
%is defined as: $E((X-c)^k)$. We write $\Mom{k}{c}{X}$ to denote the
%$k$-th moment about $c$ of $X$, and hence:
\begin{equation}\label{eq:kthMoments}
  Mom_{k}[X]=\E[X^k].
 \end{equation}
\end{definition}

\begin{remark} For a Bernoulli random variable $X$ with
  parameter probability $p$,
  all  moments of $X$ coincide with its probability. Thus, 
  $Mom_{k}[X]=P(X=1)=p$.
\end{remark}

\begin{example}
  Fig.~\ref{fig:grass} lists the first-order moment $\E[G]$ of $G$, as well
  as the first-order moment $\E[GR]$ of the mixed random
  variable $GR$.
  The second-order moment of $W2$ is used to compute the variance 
  $Var(W2)$ of $W2$ in Fig.~\ref{fig:rats}. 
\end{example}

%\begin{example} Consider the following program:
%\begin{verbatim}
%	    x = 1
%	    while (f = 1):
%	        f = 1 @ 1/2; 0 @ 1/2
%	        x = 2*x
%\end{verbatim}
%From \Mora ~we have $EV[x(n)] = 2^{n}$. 
%From G and LCU we can compute that $p(n=k) = 2^{-k}$ for $0<k\in\mathbf{N}$ and $p(n=k) = 0$ for $n=0$. 
%Combining the two, we compute EV of $x$ after the loop terminates:
%\[ EV[x] = \sum_{k=0}^\infty p(n=k)*EV[x(k)] = \sum_{k=0}^\infty 2^{-k}\cdot 2^{k} = \sum_{k=0}^\infty 1 = \infty \]
%\end{example}

%\paragraph{Algorithm to translate BN to \ProbModel.}

%!TEX root = main.tex

\subsection{Probabilistic Graphical Models as Bayesian Networks}
\label{sec:BNs}
%Widely used to model random processes. They provide very succinct representation for complex RVs (by specifying local probabilistic models - dependencies between various variables). 

%\begin{figure}[t]
%  \includegraphics[width=\linewidth]{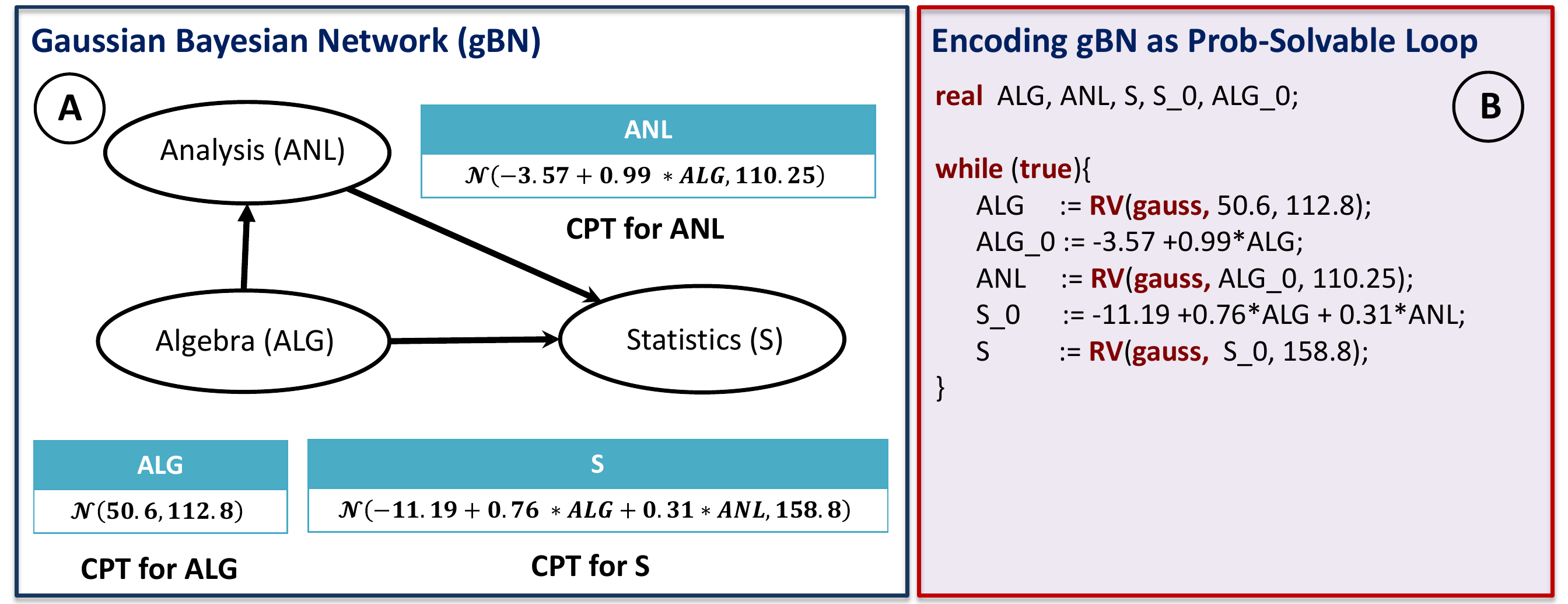}
%  \caption{Solving the inference problem of a gBN, by encoding the
%    gBN as a \ProbModel{} loop and computing MBIs.}
%  \vspace{-3ex}
%  \label{fig:gBN}
%\end{figure}

\begin{definition}[Bayesian Network (BN)] 
A \emph{Bayesian network (BN)} is a directed acyclic graph (DAG) in which each node 
%is annotated with quantitative probability information. Each node in the graph 
corresponds to a discrete/random random variable. A set of directed 
links or arrows connects pairs of BN nodes. If there is an arrow from
a BN node $Y$ to a node 
$X$, then $Y$ is said to be a \emph{parent} of $X$.
\end{definition}

For a random variable/node $X$ in a BN, we write $Par(X)$ to denote the set of 
parents of $X$ in the BN. Each BN node $X$ has a \emph{conditional probability distribution $P(X | Par(X))$} 
that quantifies the effect of the parents $Par(X)$ on the node $X$. 
Dependencies in a BN can be given in different forms and we overview
the most common ones. For a discrete variable $X$, dependencies are often given 
by a conditional probability table, by listing all possible values of parent 
variables from $Par(X)$ and the corresponding values of $X$. % see example~ \ref{example:alarm}
In the case of a continuous variable $X$, dependencies can be  specified using Gaussian distributions. % see example~ \ref{example:marks}
Another common dependency in a BN is a deterministic one, when value of a node $X$
is determined by values of its parents from $Par(X)$; that is, a binary variable can be true iff all 
its (binary) parents are true, or if one of its parents is true. 
We overview below  BN variants, studied further in Section~\ref{sec:encoding}.

%\begin{figure}[t]
%  \includegraphics[width=\linewidth]{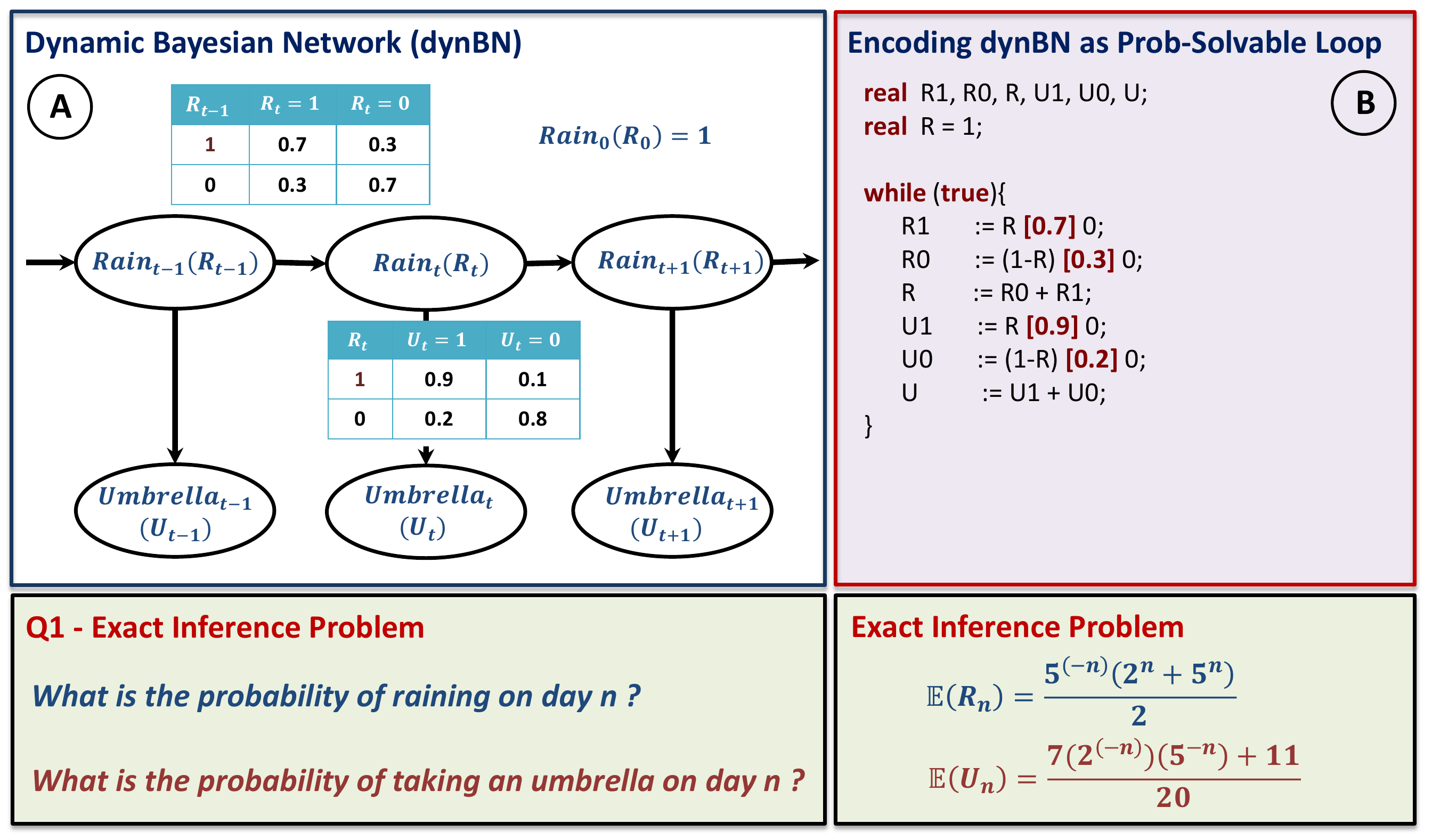}
%  \vspace{-4ex}
%  \caption{Solving probabilistic inference, by computing MBIs of the 
%    \ProbModel{} loop encoding the dynamic BN (dynBN).}
%  \label{fig:dynBN}
%\end{figure}

\begin{definition}[Variants of Bayesian Networks]\label{def:BNvariants}
\begin{itemize}
\item  A \emph{discrete Bayesian Network (disBN)} is a BN whose variables are discrete-valued.
\item
  A \emph{Gaussian Bayesian Network (gBN)} is a BN whose dependencies
are given by the Gaussian distribution in which,  for any BN node $X$, we have $P(X | Par(X)) =\Gaussian(\mu_X, \sigma_X^2)$, with
$ \mu_X = \alpha_X + \sum_{k=1}^{m_X} \beta_{X, k} Y_{X, k},$, $Par(X)=\{Y_1, \cdots, Y_{m_X}\}$ and $\sigma_X^2$ is fixed.
\item 
A \emph{conditional linear Gaussian Bayesian Network (clgBN)} is a BN
in which (i) 
continuous nodes $X$ cannot be parents of discrete nodes $Y$; (ii) the
local distribution of each discrete node $Y$ is a conditional
probability table (CPT); (iii) the local distribution of each
continuous node $X$ is a set of Gaussian distributions, one for each
configurations of the discrete parents $Y$, with the continuous
parents acting as regressors.
\item A \emph{dynamic Bayesian Network (dynBN)} is a structured BN
  % Bayesian networks used to model dynamic processes. A DBN
  consisting of a series of time slices that represent the state of
  all the BN nodes $X$ at a certain time $t$. 
	For each time-slice, a dependency structure between the
        variables $X$ at that time is defined by intra-time-slice edges. 
	Additionally, there are edges between variables from different slices---inter-time-slice edges, with their directions following the direction of time.
\end{itemize}
\end{definition}

%We illustrate the relation and hierarchy between the above BNs in
%Fig.~\ref{fig:image} on page~\pageref{fig:image}, and showcase each BN variant in the example below. 
%Fig.~\ref{figure:hierarchy}.

\begin{example}\label{ex:variantsBN}
A disBN encoding the probabilistic model of the grass getting wet is
shown in Fig.~\ref{fig:grass}(A). 
Fig.~\ref{fig:rats}(A) lists a clgBN, describing a weight loss process in a drug trial performed on rats.
The (Gaussian) random variables encoding weight loss for weeks $1$ and $2$
are respectively denoted with $W1$ and $W2$.
%Further, a gBN modeling dependency between marks from different courses is given in Fig.~\ref{fig:gBN}(A),
%weheras 
  %
%  Fig.~\ref{fig4} lists a dBN, with correspoding BN dependencies
%modeling a simple network for a burglary alarm domain. 
%
%Fig.~\ref{fig:dynBN}(A) shows a dynBN with binary-valued variables $Rain_t$ and $Umbrella_t$. 
\end{example}

\section{Programming Model: Extending \ProbModel{} Loops\label{sec:probsolvable}} %BNs, DBNs

% def exp over moments 

We introduce our programming model extending the class of {\it
  \ProbModel{} loops}~\cite{probsolvable}, allowing us to encode
and analyze 
BN properties in Section~\ref{sec:encoding}. In
particular, we extend~\cite{probsolvable} to support \ProbModel{}
loops with symbolic random
variables encoding dependencies among other (random) variables,
where Gaussian and uniform random variables
can linearly depend on other program variables, encoding this way
common BN dependencies. 
To this end, we consider probabilistic while-programs as introduced
in~\cite{Kozen81,McIverM05} and restrict this class of programs to
probabilistic programs with polynomial updates
among random variables. We write $x:= e_1[p]e_2$ to denote that the
probability of the 
program variable $x$ being updated with expression $e_1$ is $p\in[0,1]$,
whereas the probability of $x$ being updated with expression $e_2$ is
$1-p$. 
In the sequel, whenever we
refer to a \ProbModel{} loop/program, we mean a program as
defined below.

\begin{definition}[\ProbModel{} Loop]\label{def:ProbModel}
Let $m\in\N$ and $x_1,\ldots x_m$ denote real-valued program
variables. A \emph{\ProbModel{} loop} with variables $x_1,\ldots
x_m$ is a probabilistic program of the form
\begin{equation}\label{eq:ProbModel}
  I; \texttt{while(true)} \{U\},  \qquad\qquad
\end{equation}
where:

\begin{itemize}
\item(Initialization) $I$ is a sequence of initial assignments over $x_1,\ldots, x_m$.
  That is, $I$ is an assignments sequence $x_1 := c_1; x_2 :=
c_2; \dots x_m := c_m$, with $c_i\in\R$ representing a number drawn
from a known distribution~\footnote{a known distribution is a
  distribution with known and computable moments} - in particular,
$c_i$ can be a real constant.
\item (Update) $U$ denotes a sequence of $m$ random updates, each update of the form:
\begin{equation}\label{eq:ProbModel:prob_assignments}
x_i := a_i x_i + P_{i}(x_1,\dots x_{i-1}) \;[p_i]\; b_i x_i + Q_{i}(x_1,\dots x_{i-1}),
  \end{equation}
or, in case of a deterministic assignment, 
\begin{equation}\label{eq:ProbModel:det}
  x_i := a_i x_i + P_{i}(x_1,\dots x_{i-1}),
  \end{equation}
where $a_i, b_i \in \R$ are constants and $P_{i},
Q_{i}\in\R[x_1,\ldots,x_{i-1}]$ are polynomials over program
variables $x_1,\ldots,x_{i-1}$.
\item (Dependencies) The coefficients $a_i$, $b_i$  and the coefficients of $P_i$ and
$Q_i$ in the variable
assignments~\eqref{eq:ProbModel:prob_assignments}-\eqref{eq:ProbModel:det}
of $x_i$ can be drawn from a random distribution  as long as
the moments of this distribution are known and either they are  (i) 
Gaussian or uniform distributions linearly depending on $x_i$ and
other random
variables $x_j$ with $j\neq i$; or (ii) other known distributions independent from
$x_1,\ldots, x_m$ . 
\end{itemize}
\end{definition}

\begin{algorithm}[t]
\caption{Moment-Based Invariants (MBIs) of \ProbModel{}\label{algo:MBIs} Loops}
\hspace*{\algorithmicindent} \textbf{Input:} \ProbModel{} loop
$\mathcal{P}$ with variables
$\{x_1,\dots,x_m\}$, and $k\geq 1$  \\
\hspace*{\algorithmicindent} \textbf{Output:} $MBI$s of $\mathcal{P}$
of degree $k$ \\%$\{\Mom{k}{0}{x_1},\ldots, \Mom{k}{0}{x_m}\}$ \\
%\hspace*{\algorithmicindent} \textbf{{\color{white}Output:}} $\{x_1,\ldots, x_m\} $ \\
\hspace*{\algorithmicindent} \textbf{Assumptions:} $n\in\N$ is an
arbitrary loop iteration of $\mathcal{P}$
\begin{algorithmic}[1]
  \State Extract moment-based recurrence relations of
  $\mathcal{P}$, for $i=1,\ldots,m$:
  \[\begin{array}{lcl}
\E[x_i(n+1)]
		&= & p_i\cdot  \E\big[a_i x_i(n) + P_i(x_1(n),\dots,x_{i-1}(n))\big]\\
 		&& +(1-p_i)\cdot \E\big[b_i x_i(n) +
           Q_i(x_1(n),\dots,x_{i-1}(n))\big].
      
    \end{array}\]
  \State $MBRecs=\{\E[x_i(n+1)]~\mid~ i=1,\ldots,m\}$\label{algo:MBRecs:init}\Comment{initial set of
    moment-based recurrences}
%\State Let $S=\{\Mom{k}{0}{x_1(n)},\ldots, \Mom{k}{0}{x_m(n)}\}$
 \State
 $S:=\{x_1^k,\ldots,x_m^k\}$\label{algo:Evars:init}\Comment{initial
   set of monomials of $\E$-variables}
 %\Statex \hfill{as $\Mom{k}{0}{x_i(n)}=\E[x_i(n)^k]$}
   \While{$S\not=\emptyset$}
        \State $M :=  \prod_{i=1}^{m}x_i^{\alpha_i} \in S$, where
        $\alpha_i\in\N$
        \State  $S := S\setminus\{M\}$
        %\State Remove a monomial $M = \prod_{i=1}^{m}x_i^{\alpha_i}$
        %from $S$
        
        \State $M' = M[x_i^{\alpha_i}\leftarrow upd_i]$,~ for each $i=m,\ldots,1$
        \Comment{replace each $x_i^{\alpha_i}$ in $M$ with $upd_i$}
        \Statex
        \Statex  \qquad\qquad where $upd_i$ denotes:
        \Statex \qquad\qquad$p_i\cdot \big(a_i x_i + P_{i}(x_1,\dots
        x_{i-1})\big)^{\alpha_i} + (1-p_i)\cdot\big( b_i x_i +
        Q_{i}(x_1,\dots x_{i-1})\big)^{\alpha_i}$
        \Statex
        \State Rewrite $M'$ as $M'=\sum
        N_j$ for monomials $N_j$ over $x_1,\ldots,x_m$\qquad
        %\Statex		\qquad\Comment{Since $x_i$ comes from LHS branch with probability $p$ and from RHS with $(1-p)$.}\label{substitute}
        %(n+1)st iteration LHS x_i; (n)th RHS x_i
        \State {\bf Simplify moment-based recurrence $\E[M(n+1)] = \E[\sum
        N_j]$ \label{algo:Evars} using~\eqref{eq:Evar:simplifRules}-\eqref{eq:Evar:newSimplifRules}} %over E-variables
        \Statex \Comment{$M(n+1)$ denotes $\prod_{i=1}^{m}x_i(n+1)^{\alpha_i}$}
        \State $MBRecs=MBRecs\cup\{\E[M(n+1)]\}$
        \Statex\Comment{add $\E[M(n+1)]$ to the set of moment-based recurrences}
        \For{each monomial $N_j$ in $M$}
              \If{$\E[N_j]\nin MBRecs$}\Comment{there is no
                moment-based recurrence for $N_j$} 
              \State $S=S\cup\{N_j\}$ \Comment{add $N_j$~to~$S$}
              \EndIf
        \EndFor
        
        \EndWhile
        \State {\bf end while}\label{eq:algo:loop_end}
  \State $MBI=\{\E[x_i(n)^k]-f_{x_i,k}(n)=0 \ \mid \ i=1,\ldots m\}$\label{algo:MI}
  \Statex\Comment{$f_{x_i,k}(n)$ is the closed   form solution of $E[x_i^k]$}
  \State \underline{\bf return}  $MBI$s of
  $\mathcal{P}$ for the $k$th moments of $x_1,\ldots,x_m$
\end{algorithmic}
\end{algorithm}

%%
%The coefficients $a_i$, $b_i$  and the coefficients of $P_i$ and
%$Q_i$ in the variable assignments~\eqref{eq:ProbModel:prob_assignments}-\eqref{eq:ProbModel:det} of
%\ProbModel{} loops can be drawn from a random distribution  as long as
%the moments of this distribution are known and are independent from
%program variables $x_1,\ldots,x_m$. Hence, the variable updates of
%\ProbModel{} loop can involve coefficients drawn from Bernoulli,
%Gaussian, 
%uniform, and other distributions.

Note that \ProbModel{} loop support parametrised distributions, for
example one may have the uniform distribution $\Unif(d_1,d_2)$ with
arbitrary $d_1,d_2\in\R$ symbolic constants. Similarly, 
the probabilities $p_i$ in the probabilistic
updates~\eqref{eq:ProbModel:prob_assignments}
can be symbolic constants.
The restriction on random variable dependencies from
Definition~\ref{def:ProbModel} extends~\cite{probsolvable} by
allowing parameters of Gaussian and uniform random variables $x_i$ in
\ProbModel{} loop to be specified using previously updated program
variables $x_j$ and to depend on $x_i$ linearly. In Theorem~\ref{thm:MBIs} we
prove that this extension maintains the existence and  computability of higher-order
statistical moments of \ProbModel{} loops, allowing us to derive all
\emph{moment-based invariants} of \ProbModel{} loops of degree $k\geq 1$.

% example?
\begin{definition}[Moment-Based Invariants (MBIs)]
  Let $\mathcal{P}$ be a  \ProbModel{} loop and $n\in\N$ denote
  an arbitrary loop iteration of $\mathcal{P}$. Consider $k\in\N$ with
  $k\neq 0$. A \emph{moment-based
    invariant (MBI) of degree $k$ over $x_i$} of $\mathcal{P}$ is $\E[x_i(n)^k]=f_{x_i,k}(n)$, where $f_{x_i,k}:\N\to\R$ of $n$
  is a closed form expression denoting the $k$th (raw) higher-order
  moments of $x_i$, such that $f_{x_i,k}(b)$ depends only $n$ and the initial
  variable 
  values of $\mathcal{P}$. 
\end{definition}

In what follows, we consider an arbitrary \ProbModel{} loop
$\mathcal{P}$ and formalize our results relative to
$\mathcal{P}$. Further, we reserve $n\in\N$ to denote an arbitrary
loop iteration of $\mathcal{P}$. 
Note that MBIs of  $\mathcal{P}$ yield
functional representations of the $k$th higher-order moments of loop
variables $x_i$ at $n$. Hence, the MBIs
$\E[x_i(n)^k]=f_{x_i,k}(n)$  are
valid and invariant. % for any $n\in\N$. 
In Algorithm~\ref{algo:MBIs} we show that MBIs of \ProbModel{} loops can always
be computed. %and prove the existence of MBIs of \ProbModel{} loops in
%Theorem~\ref{thm:MBIs}.
As in~\cite{probsolvable},
the main ingredient of Algorithm~\ref{algo:MBIs}  are so-called
\emph{$\E$-variables} for capturing expected values and other
higher-order moments of loop variables of $\mathcal{P}$. %, which are then themselves treated as sequences of variables.

\begin{definition}[$\E$-variables of \ProbModel{}
  Loops~\cite{probsolvable}\label{def:Evars}]
  An \emph{$\E$-variable} of
  $\mathcal{P}$ is an expected value of a monomial over the random
  variables $x_i$ of $\mathcal{P}$.
\end{definition}

Using Definition~\ref{def:Evars}, in Algorithm~\ref{algo:MBIs} we
compute $\E$-variables based on expected values $\E[x_i(n)]$ of loop
variables $x_i$, as well as using higher-order and mixed moments of
$\mathcal{P}$,  such as $\E[x_i^k(n)]$ or $\E[x_i x_j(n)]$
(lines~\ref{algo:Evars:init}  and~\ref{algo:Evars} of
Algorithm~\ref{algo:MBIs}). To this end, Algorithm~\ref{algo:MBIs}
resembles the approach of~\cite{probsolvable} and extends it to handle
\ProbModel{} loops with dependencies among random variables drawn from
Gaussian/uniform distributions (line~\ref{algo:Evars} of
Algorithm~\ref{algo:MBIs}). More specifically,
Algorithm~\ref{algo:MBIs} 
uses \emph{moment-based recurrences over
  E-variables} from~\cite{probsolvable},  describing
the expected values $\E[x_i(n)]$ of $x_i$ as
functions of other E-variables (line~\ref{algo:MBRecs:init} of Algorithm~\ref{algo:MBIs}). To this end, note that 
\ProbModel{} loop updates 
from~\eqref{eq:ProbModel:prob_assignments}-\eqref{eq:ProbModel:det}
over $x_i$
yield 
linear recurrences with constant
coefficients over $\E[x_i(n)]$, by using the following simplification
rules over $\E$-variables:
\begin{equation}\label{eq:Evar:simplifRules}
\begin{array}{lcl}
\E[expr_1 + expr_2] &\rightarrow& \E[expr_1] + \E[expr_2]\\
\E[expr_1 \cdot expr_2] &\rightarrow & \E[expr_1] \cdot \E[expr_2],
                                      \quad\ {\small \text{if\ } expr_1,expr_2\text{ are independent}}\\
\E[c\cdot expr_1] &\rightarrow & c\cdot \E[expr_1]\\
\E[c] &\rightarrow & c\\
\E[\mathcal{D} \cdot expr_1] &\rightarrow& \E[\mathcal{D}]\cdot \E[expr_1]
\end{array}
\end{equation}
where $c\in\R$ is a constant, $\mathcal{D}$ is a known
independent distribution,  and $expr_1$, $expr_2$ are polynomial expressions over random variables. 
%
%
%These rules simplify expected values of complex expressions (arising from loop updates as recurrence relations) to recurrences over E-variables. Solutions to the recurrences then give Moment-Based Invariants.
Yet, to address our \ProbModel{} loop extensions compared
to~\cite{probsolvable}, in addition to~\eqref{eq:Evar:simplifRules} we
need to ensure that dependencies among the 
random variables of $\mathcal{P}$  yield also 
moment-based recurrences. We achieve this  by introducing the
following two simplification rules over random variables with
Gaussian/uniform distributions: % $\E$-variables: 
\begin{equation}\label{eq:Evar:newSimplifRules}
\begin{array}{lcl}
\Gaussian(expr_1, \sigma^2)	&\rightarrow & expr_1+\Gaussian(0,\sigma^2),\\
\Unif(expr_1,expr_2)			&\rightarrow & expr_1 + (expr_2-expr_1)\Unif(0,1),\\
\end{array}
\end{equation} %% or \mu+\sigma\Normal(0,1) when \Normal given by stdev, not variance
% \Normal(expr_1, \sigma^2)	&\rightarrow & expr_1+\Normal(0,\sigma^2),\\
% expr   ->    expr(N(e,s) \ e+N(0,s))
for arbitrary polynomial expressions $expr_1$, $expr_2$ over random
variables. 
Using~\eqref{eq:Evar:newSimplifRules} in addition to~\eqref{eq:Evar:simplifRules}, moment-based recurrences of
\ProbModel{} loops can always be computed as linear recurrences with
constant coefficients over $\E$-variables (line~\ref{algo:Evars} of Algorithm~\ref{algo:MBIs}), implying thus the existence
of closed
form solutions of $\E$-variables and hence of MBIs of $\mathcal{P}$,
as formalized below. 

\begin{theorem}[Moment-Based Invariants (MBIs) of \ProbModel{}\label{thm:MBIs} Loops]
Let $\mathcal{P}$ be  a \ProbModel{} loop with variables
$\{x_1,\ldots, x_m\}$ and consider $k\in\N$ with $k\geq 1$.
Algorithm~\ref{algo:MBIs} is sound and terminating, yielding MBIs
of degree $k$ of $\mathcal{P}$. 
\end{theorem}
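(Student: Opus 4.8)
The plan is to prove the two assertions of the theorem separately: first that Algorithm~\ref{algo:MBIs} terminates, and second that it is sound, i.e.\ that the returned identities $\E[x_i(n)^k]=f_{x_i,k}(n)$ are genuinely valid moment-based invariants. Throughout, I would work with a fixed \ProbModel{} loop $\mathcal{P}$ and a fixed $k\geq 1$, and I would lean on the established machinery of~\cite{probsolvable}, isolating precisely the new ingredient introduced here, namely the dependency rule from Definition~\ref{def:ProbModel}(iii) together with the two new simplification rules in~\eqref{eq:Evar:newSimplifRules}.

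\textbf{Termination.} The key structural observation is that the update of $x_i$ in~\eqref{eq:ProbModel:prob_assignments}--\eqref{eq:ProbModel:det} is \emph{triangular}: $x_i$ is updated using only $x_i$ itself (linearly, with coefficients $a_i,b_i$) and polynomials $P_i,Q_i$ in the strictly lower-indexed variables $x_1,\dots,x_{i-1}$. Consequently, when the algorithm substitutes $upd_i$ for $x_i^{\alpha_i}$ in a monomial $M=\prod_i x_i^{\alpha_i}$ and expands, every resulting monomial $N_j$ has the property that its exponent of $x_i$ is at most $\alpha_i$, and if it equals $\alpha_i$ then the $N_j$ differs from $M$ only in lower-indexed variables. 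I would make this precise by defining, for each monomial, the degree vector $(\alpha_m,\dots,\alpha_1)$ and ordering monomials by the reverse-lexicographic order that compares $\alpha_m$ first; the substitution step strictly decreases this vector (or leaves it unchanged only for monomials already of lower rank), so only finitely many distinct monomials over $x_1,\dots,x_m$ of total degree bounded by that of the initial seeds $x_1^k,\dots,x_m^k$ can ever enter $S$. Since each monomial is removed from $S$ once processed and is added to $MBRecs$, and monomials already in $MBRecs$ are never re-added (the check $\E[N_j]\nin MBRecs$ on line~13), the while-loop executes finitely many times. Here the new rules~\eqref{eq:Evar:newSimplifRules} matter only in that they must not introduce genuinely new program variables or raise degrees: $\Gaussian(expr_1,\sigma^2)$ is rewritten as $expr_1+\Gaussian(0,\sigma^2)$ where $\Gaussian(0,\sigma^2)$ is an \emph{independent} known distribution of fixed (constant) moments, and similarly for $\Unif$; so after applying~\eqref{eq:Evar:newSimplifRules} the resulting expression is a polynomial in the $x_j$ with coefficients that are either constants or independent known distributions, and the degree bookkeeping above is unaffected.

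\textbf{Soundness.} I would show by induction on the processing order that for every monomial $M$ added to $MBRecs$, the recurrence $\E[M(n+1)]=\E[\sum_j N_j]$ computed on lines~9--10 is a correct identity relating $\E$-variables at iteration $n+1$ to $\E$-variables at iteration $n$, and moreover that after full simplification via~\eqref{eq:Evar:simplifRules}--\eqref{eq:Evar:newSimplifRules} the right-hand side is a \emph{linear} combination, with constant coefficients, of $\E$-variables that themselves occur in $MBRecs$. The correctness of the raw substitution $M(n+1)=M[x_i^{\alpha_i}\leftarrow upd_i]$ is immediate from the semantics of the probabilistic update~\eqref{eq:ProbModel:prob_assignments} (taking expectations of the two-branch assignment produces exactly the $p_i\cdot(\cdot)+(1-p_i)\cdot(\cdot)$ combination), and the simplification rules in~\eqref{eq:Evar:simplifRules} are the standard linearity-of-expectation and independence facts. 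The only genuinely new step is justifying~\eqref{eq:Evar:newSimplifRules}: I would argue that if a coefficient in the update of $x_i$ is drawn from $\Gaussian(\mu,\sigma^2)$ with $\mu=\alpha+\sum_k\beta_k x_{j_k}$ a linear (affine) expression in the other variables, then this sample equals $\mu+Z$ in distribution where $Z\sim\Gaussian(0,\sigma^2)$ is drawn fresh and independently of all $x_1,\dots,x_m$ — this is just the translation-invariance of the Gaussian family — and likewise a $\Unif(a,b)$ sample with affine $a,b$ equals $a+(b-a)U$ with $U\sim\Unif(0,1)$ fresh and independent. Substituting these decompositions turns every occurrence of a dependent Gaussian/uniform coefficient into a polynomial in the $x_j$ plus an independent zero-mean (resp.\ $\Unif(0,1)$) factor whose moments are known constants, so the last rule of~\eqref{eq:Evar:simplifRules} applies and all cross-terms factor as products of constants and $\E$-variables. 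Hence every moment-based recurrence produced is linear with constant coefficients over the finite set $MBRecs$ of $\E$-variables.

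\textbf{From recurrences to closed forms.} Finally, having established that $MBRecs$ is a finite system of coupled linear recurrences with constant coefficients over the $\E$-variables — in fact, by the triangular structure, a system that can be ordered so that each new $\E$-variable depends only on itself and on previously-solved ones — I would invoke the standard fact (as in~\cite{probsolvable}) that such systems admit closed-form solutions $f_{x_i,k}(n)$ expressible as exponential polynomials in $n$ whose parameters are the constant coefficients and the initial values $c_1,\dots,c_m$ (or their known moments) from $I$. These solutions depend only on $n$ and the initial data, which is exactly the requirement in the definition of an MBI; substituting them in line~\ref{algo:MI} yields the output $\E[x_i(n)^k]-f_{x_i,k}(n)=0$, completing the proof.

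\textbf{Main obstacle.} The delicate point is the soundness of rule~\eqref{eq:Evar:newSimplifRules} when the distribution parameters themselves depend on \emph{random} variables $x_j$ (and, per Definition~\ref{def:ProbModel}, possibly on $x_i$ itself linearly): one must be careful that the ``fresh independent'' $Z$ or $U$ is independent of the very variables appearing in the parameter expression, which is what licenses pushing $\E$ through the product. I would handle this by appealing to the conditional characterization — conditioning on $(x_1,\dots,x_m)$, the sampled coefficient is $\mu+Z$ with $Z$ drawn independently of that conditioning — and then taking the outer expectation via the tower rule; this is where the restriction to \emph{linear} (affine) dependence and to the location-scale structure of the Gaussian/uniform families is essential, and it is the one place the argument genuinely goes beyond~\cite{probsolvable}.
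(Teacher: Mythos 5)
Your proposal is correct and shares the paper's overall architecture: both identify the correctness of the new rewriting rules~\eqref{eq:Evar:newSimplifRules} as the only genuinely new obligation, and both discharge the remaining soundness and termination claims by reduction to the machinery of~\cite{probsolvable}. The difference lies in how that key step is justified. The paper proves the distributional identity $\Gaussian(expr_1,\sigma^2) = expr_1 + \Gaussian(0,\sigma^2)$ by computing the characteristic function $\E[e^{it\Normal(expr_1,\sigma^2)}]$ as a double integral, factoring it into $\E[e^{it\Normal(0,\sigma^2)}]\cdot\E[e^{it\cdot expr_1}]$ via a change of variables, and invoking the bijection between distributions and characteristic functions; you instead argue directly from the location-scale structure of the Gaussian and uniform families, conditioning on $(x_1,\dots,x_m)$ to write the sample as $\mu+Z$ with $Z$ fresh and independent, and then applying the tower rule. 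The two arguments establish the same fact; yours is arguably more elementary and sidesteps the paper's implicit assumption that $expr_1$ admits a density $f$ (which fails when $expr_1$ involves discrete variables), while the characteristic-function route packages the independence claim into a single factorization identity. You also re-derive termination from scratch via the triangular update structure and a well-founded monomial ordering, where the paper simply observes that polynomials remain polynomials under~\eqref{eq:Evar:newSimplifRules} and cites~\cite{probsolvable}; your version is more self-contained but proves nothing the citation does not already cover.
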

\begin{proof}
  We first prove correctness of the simplification
  rules~\eqref{eq:Evar:newSimplifRules}, from which the soundness and 
  termination of Algorithm~\ref{algo:MBIs} follows. 
%\begin{enumerate}
%	\item \textit{Correctness: } The two rules are correct, i.e. the expected values are indeed the same. % by showing that the two distributions (N(mu, sig^2) and mu+N(0,sig^2)) are exactly same
%	\item \textit{Soundness. }   With the new rules we can capture the extended model and simplify the expressions to C-finite expressions over E-variables.
	% by reduction to original PS the rest AND termination follows from the original appendix proof
%	\item \textit{Termination. } The Algorithm~\ref{algo:MBIs} still terminates. % pretty much the same proof works (appendix of \cite{probsolvable})
%\end{enumerate}
%
%\paragraph{Correctness.}  
%
%We show that  to show, that the left-hand-side and the right-hand-side
%distributions of the simplification rules in
%equation~\eqref{eq:Evar:newSimplifRules} are equal.
Recall that there is a one-to-one correspondence between
probability distributions and characteristic functions $E[e^{itX}]$ of a random
variable $X$.
In particular, the characteristic function of a Gaussian distribution
with parameters $\mu$ and $\sigma^2$ is $e^{i\mu t -
  \frac{1}{2}\sigma^2 t^2}$, and thus the characteristic function of
$\Gaussian(expr_1, \sigma^2)$ is $\E[e^{it\Normal(expr_1,
  \sigma^2)}]$. Then, 

%$ e^{i \cdot expr_1 \cdot t - \frac{1}{2}\sigma^2 t^2}$

%&=  E\left[\int_{x} e^{itx} \frac{1}{\sqrt{2\pi\sigma^2}} e^{-\frac{(x-expr_1)^2}{2\sigma^2}}dx\right]		 \\ 
\begin{align*} 
\E\left[e^{it\Normal(expr_1, \sigma^2)}\right] 
&= \E\left[\int    e^{it\Normal(y,\sigma^2)}  f(y)   dy\right]
                                                  =\iint e^{itx} \frac{1}{\sqrt{2\pi\sigma^2}} e^{-\frac{(x-y)^2}{2\sigma^2}} f(y)dxdy			 \\ 
%&= \iint e^{it(x+y)} \frac{1}{\sqrt{2\pi\sigma^2}} e^{-\frac{((x+y)-y)^2}{2\sigma^2}}f(y)dxdy
                                                                                                                                                                       \\
& = \int e^{itx} \frac{1}{\sqrt{2\pi\sigma^2}} e^{-\frac{(x)^2}{2\sigma^2}} dx \int e^{ity} f(y) dy	\\		 
& = \E\left[e^{it\Normal(0,\sigma^2)}\right] \cdot \E\left[e^{it\cdot               expr_1}\right] = \E\left[e^{it\left(\Normal(0, \sigma^2) + expr_1\right)}\right]
\end{align*}
by change of limits for $x\in\R$, where $f$ is the probability density
function of the random variable $expr_1$. Note that 
$\E\left[e^{it\left(\Normal(0, \sigma^2) + expr_1\right)}\right]$
corresponds to the characteristic function of
$expr_1+\Gaussian(0,\sigma^2)$, and hence the simplification rule
$\Gaussian(expr_1, \sigma^2) \rightarrow  expr_1+\Gaussian(0,\sigma^2)$
of~\eqref{eq:Evar:newSimplifRules} is correct. The correctness of the 
simplification rule of~\eqref{eq:Evar:newSimplifRules} over  uniform
distributions can be established in a similar way. 

%Similarly, for a uniform distributions with limits $l, u$ we have corresponding characteristic function $CF of uniform$.
%Hence $\Unif(expr_1,expr_2)$ 
%corresponds to $ e^{sth}$. Then we have
%which corresponds to $expr_1 + (expr_2-expr_1)\Unif(0,1)$
% \Normal(expr_1, \sigma^2)	&\rightarrow & expr_1+\Normal(0,\sigma^2),\\
% \Unif(expr_1,expr_2) 			&\rightarrow & expr_1 + (expr_2-expr_1)\Unif(0,1),\\

%Using the correctness of~\eqref{eq:Evar:newSimplifRules}, we now prove soundness and termination of Algorithm~\ref{algo:MBIs}. 
Further, observe that polynomial expressions remain polynomial after applications
of~\eqref{eq:Evar:newSimplifRules} (line~\ref{algo:Evars} 
of Algorithm~\ref{algo:MBIs}).
Once Gaussian and uniform distributions depending on loop variables
are replaced using~\eqref{eq:Evar:newSimplifRules},
we are left with independent known distributions and polynomial
expressions over random variables for
which~\eqref{eq:Evar:simplifRules} can further be used, as
in~\cite{probsolvable}. As Algorithm~\ref{algo:MBIs}
extends~\cite{probsolvable} only with~\eqref{eq:Evar:newSimplifRules}
(line~\ref{algo:Evars} of Algorithm~\ref{algo:MBIs}),
using  results of~\cite{probsolvable}, we
conclude that Algorithm~\ref{algo:MBIs} is both sound and terminating.
% arg that still c-finite - obv from the rule (no exp actually changes, their just moved around) && cmoplexity of the resulting expressions is still such that termination
\qed
\end{proof}

\begin{example}
Consider the \ProbModel{} loop in Fig.~\ref{fig:rats}(B). An example of $\E$-variable would be $E[W2^2]$, 
for which an MBI 
$E[W2^2] = 4.01408a^2 + 53.83168a + 4.01408b + 250.3172$ 
is computed using Algorithm~\ref{algo:MBIs}.
\end{example}

\begin{remark}
  While \ProbModel{} loops are
  non-deterministic, with
  trivial loop guards of $true$,  we note that probabilistic loops
  bounded by a number of iterations 
  (such as $n:=0; while(n<1000)\{n:=n+1\}$) 
  %and programs with boolean flags as loop guards 
  can be encoded as \ProbModel{} loops. 
\end{remark}

%!TEX root = main.tex

\section{Encoding BNs as \ProbModel{} Loops} %BNs, DBNs
\label{sec:encoding}

% finish with thm: bn to ps possible always for a class

In this section we argue that \ProbModel{} loops offer a natural way
for encoding BNs, enabling further BN analysis via \ProbModel{}
loop reasoning in Section~\ref{sec:BN problems}.
%we further 
%automate the analysis of BNs by generating moment-based
%invariants of the \ProbModel{} loops encoding the respective BNs. 

\subsection{Modeling Local Probabilistic Models of BNs as
  \ProbModel{} Loop Updates} 
\label{ssec:LPMs}
A BN is fully specified by its local dependencies. 
We consider common local probabilistic models and
encode these models as \ProbModel{} loop instances, as follows.

\subsubsection{Deterministic Dependency}
We first explore local probabilistic models specifying deterministic
dependency, that is when the values of  BN nodes $X$  are determined by the
values of the parent variables from $Par(X)$.
For example, when $X$ is binary-valued, such a
deterministic dependency can be a Boolean expression. On the other
hand, when $X$ 
is continuous, 
deterministic dependency can be a function over $Par(X)$. 

For a continuous variable $X$ whose value is given by a polynomial $Q(Par(X))$, encoding deterministic dependencies
as a \ProbModel{} loop update is straightforward: we simply set $X = Q(Par(X))$.

%To encode deterministic dependencies in \ProbModel{}, let us introduce the following notation:
For a discrete random variable $X$, let $[X=x]$ be the expression such
that $[X=x] = 1$ if $X=x$ and $0$ otherwise. Note that when $X$ is
binary-valued, we have $[X = 1] = X$ and $[X = 0] = 1-X$.
It follows that, in general, for a discrete variable $X$ with possible values $x=0, 1, \cdots, k$, we have 
$[X = x] = \prod_{{\substack{0\le i<k \\ i\neq d}}} \frac{X-i}{x-i}.$
Furthermore, let $[(X, Y) = (x, y)] = [X = x] \cdot [Y = y]$. Then, $[(X, Y) = (x, y)] = 1$ iff $X = x \land Y = y$, and $0$
otherwise. 
Finally, we write $[X\not=x]$ to denote $1 - [X=x]$. Observer that  $[X=x]$
and $[X\not=x]$ are polynomials in $X$, providing thus a 
natural way to specify deterministic dependencies as
updates~\eqref{eq:ProbModel:prob_assignments}-\eqref{eq:ProbModel:det}
of \ProbModel{} loops (see Algorithm~\ref{alg:BNtoPS}).

\subsubsection{Conditional Probability Tables -- CPTs}
\label{subsubsec:CPTs}
As shown in Fig.~\ref{fig:grass}(A), % and~\ref{fig:alarm}
a common way to specify BN dependencies among discrete
variables is CPTs, with each CPT line
representing a possible assignment of values of a BN node $X$ to
$Par(X)$.  % to variable's parents. 
A CPT for $X$ can be turned into \ProbModel{} loop updates,  as
follows.

We represent values of $X$ with integers. For simplicity, assume that
$X$ is binary-valued. Let $Par(X)=\{Y_1, \cdots, Y_k\}$, denoting the
parents of $X$.
For each line $L$ in the CPT for $X$ we introduce a new variable
$X_L$. Each line $L$ specifies values for $Par(X)$; for example,  $Y_1
= y_1, \cdots, Y_k = y_k$. Let $P(X | L) = p_L$ and define

%\[ X_L = \prod_{0<i\le k}  [Y_i = y_i] ~@~p_L;~0~@~(1-p_L).\]
\begin{equation}\label{eq:encode cpt}
	X_L = \prod_{0<i\le k}  [Y_i = y_i] [p_L] 0,
\end{equation}
encoding that the value of $X_L$ is $0$ if the values of $Y_i$ are
not specified in the respective CPT line $L$; otherwise the value
of $X_L$ is 1 with probability $p_L$. We then set 
\begin{equation}\label{eq:encode d}
	X = \sum_{L\in CPT} X_L.
      \end{equation}

%We note that
%(i) the number of parameters
%needed to describe
%a CPT grows exponentially with the number of parent variables, and
%(ii) CPTs are insufficient for random variables with infinite domains,
%such as continuous random variables.

 \begin{example}
%This allows us to encode discrete BNs, e.g. Alarm from Figure~\ref{fig:alarm}. % and DBNs
   Using~\eqref{eq:encode cpt}-\eqref{eq:encode d},
     the disBN of Fig.~\ref{fig:grass}(A) is encoded as a
     \ProbModel{} loop in Fig.~\ref{fig:grass}(B). % and DBNs
     While the parameters of $S$ and $G$ are not
 directly visible from the disBN, these parameters are given by the expected
 values of $S$ and $G$ in the \ProbModel{} loop of Fig.~\ref{fig:grass}(B). 
% as can be seen in the eg, E[G]=, E[GR]= , 
 Note that Fig.~\ref{fig:grass}(B) also features a $GR$ variable
 corresponding to a Bernoulli random variable depending on $G$ and
 $R$, such that $GR$ is $1$ iff both $G$ and $R$ are $1$. The 
program variable $continue$ samples a sequence of Bernoulli random
variables (one for each iteration $n$),
while the random variable $count$ represents a geometric distribution
encoding the sum of $continue$ values. %(n)$
%represents the sum of first $n$ $continue$ variables ($count$ itself is a random variable representing a geometric distribution). 
\end{example}

\subsubsection{Linear Dependency for Gaussian Variables}

A local probabilistic model  for a Gaussian random variable with
continuous parents
(as introduced in Definition~\ref{def:BNvariants}) can be encoded as a
\ProbModel{} loops update, as follows:
\begin{equation}\label{eq:encode g}
	X = RV(gauss,\: \alpha_X + \sum_{Y\in Par(X)} \beta_{X,Y} \cdot Y,\: \sigma_X^2),
\end{equation}
where $\alpha_X, \beta_{X,Y}$ are constants, $\sigma_X^2$ is fixed and $RV(gauss,
\mu,\sigma^2)$ denotes  a Gaussian random variable drawn from a Gaussian
distribution $\Gaussian(\mu,\sigma^2)$. 

%\begin{example}
%  The \ProbModel{} loop encoding of the gBN of Fig~\ref{fig:gBN}(A) is
%  given in Fig~\ref{fig:gBN}(B).
%  Program variables $ALG$, $ANL$, and $S$ correspond to Gaussian random variables, using~\eqref{eq:encode g}. Variables $ANL\_0$ and $S\_0$ are helper variables introduced for readability and encode the dependence between variables. The Gaussians are given by their first two moments, computed from MBIs in \ProbModel{}.
% \end{example}

\subsubsection{Conditional Linear Gaussian Dependency} 

By combining BN dependencies on discrete and continuous variables for
a Gaussian random variable $X$,
we can model conditional linear Gaussian dependencies for $X$.
%A conditional linear Gaussian dependency specifies a Gaussian (as above) for all possible values of its parents.
Let $D$ be the joint distribution of the discrete parents of $X$ and for each $d\in  D$
let $\Gaussian_d$ be the Gaussian distribution associated with condition $d$ (here $\Gaussian_d$ may depend on the
values of continuous parents $Par(X)$ of $X$, as discussed in Section~\ref{sec:probsolvable}). 
The conditional linear Gaussian dependency for $X$ can be modeled as
the following \ProbModel{} loop update: 
\begin{equation}\label{eq:encode clg}
	\sum_{d \in \Omega(D)} [D = d] \cdot N_d.
\end{equation}
%where \Omega(D) denotes the sample space of D
%This allows us to encode conditional linear gaussian BNs. 

\begin{example}
  Fig.~\ref{fig:rats}(B) shows the \ProbModel{} loop encoding of the 
  clgBN of Fig.~\ref{fig:rats}(A). The random variables, $W_1$ and
  $W_2$ are given by conditional linear Gaussian dependency and
  encoded using~\eqref{eq:encode clg}.
  For simplicity, $W1$ and $W2$ are further split into variables $W1\_1$ and $W1\_2$, and $W2\_1$ and $W2\_2$, respectively, representing different values of $W1$ and $W2$ based on the value of $D$.
  Further, $D1\_S$ is a binary variable which is $1$ iff $D$ is $1$
  and $S$ is $0$, and $W2D1\_S$ represents the expected value of $W2\cdot D1\_S$. 
 \end{example}

 \subsubsection{Temporal Dependencies in DynBNs}
Dependencies in dynBNs are given by intra- and inter-time-slice edges. 
While the encoding of these dependencies is similar to the
aforediscussed BN dependencies, 
there are two restrictions on the structure of the dynBNs %,
%specifically for inter-time-slice edges,
ensuring that dynBNs can be
encoded as \ProbModel{} loops. 
First, dependency of a dynBN variable $X$ on itself must be represented by a
linear function. This restriction could be lifted for discrete
variables, as discussed in Lemma~\ref{lemma:dRV first moments}. 
Second, a variable $X$ can only depend on itself in previous time-slice and current time-slice variables. 

\begin{example}
Fig.~\ref{fig:dynBN}(B) lists the \ProbModel{} loop corresponding to Fig.~\ref{fig:dynBN}(A). 
The Bernoulli random variables $R$ and $U$ are encoded
using~\eqref{eq:encode cpt}-\eqref{eq:encode d}. The parameters of $R$
and $U$ change across iterations, corresponding to parameters in
different time-slices of the dynBN; their concrete values are given by
the expected values of $R$ and $U$. %in \ProbModel{}. 
\end{example}

%Even with these restrictions we can encode many common dependencies:
%\\- binary-valued variable with one parent - itself in previous time-slice (as above); hence programs such as the Umbrella problem from Figure~\ref{example:umbrella},
%\\- linear gaussian and clg dependencies (as above); hence programs such (some?) Kalman filters
%\\- counts, ($Count_{t+1} = Count_t + X$), and other polynomial dependencies

% what can be done using \ProbModel{} principles but not in \ProbModel{} or \Mora (i.e. need extra theory, algorithms to support this)
% 		arbitrary (finite) discrete variables and dependency on multiple discrete variables

%Note that:
%\begin{itemize}
%\item For non-binary discrete variable or discrete variable with multiple (discrete) parents it's a bit trickier. By default, Prob-solvable cannot deal with this due to nonlinearity of self-dependence. However, since the domain for the variable is discrete, we only neeed first k moments to specify all moments (k is the number of possible values for the variable). Higher-order moments are then linear combinations of first k moments. We can exploit this in the system of recurrence equations for E-variables hence lift the first restriction for discrete variables.
%\item With this, dDBNs and clgDBNs can also be modeled using the \ProbModel{} principles, though \Mora{} does not support this at the moment
%\end{itemize}

\begin{algorithm}[t]
\caption{Encoding  BN variants as \ProbModel{} loops\label{alg:BNtoPS}}

\hspace*{\algorithmicindent} \textbf{Input:} BN \\
\hspace*{\algorithmicindent} \textbf{Output:}  \ProbModel{} program\\
\hspace*{\algorithmicindent} \textbf{Notation:} $LPM$ denoting a local
probabilistic model
\begin{algorithmic}[1]
	\State $Nodes:=$ topologically ordered set of BN nodes 
	\For{$X$ in $Nodes$}
		\If{LPM of $X$ is CPT}
			\For{each line $L$ in the CPT}   Set
                        $X_L$ as in~\eqref{eq:encode cpt}
	        	%\State Set $X_L = \prod_{0<i\le k}  [Y_i = y_i] [p_L] 0$		\Comment as specified in equation~\eqref{eq:encode cpt}
	        	%\State Set $X_L$, as specified in equation~\eqref{eq:encode cpt}
	        	%	\Comment as specified in equation~\eqref{eq:encode cpt}
	        	%\Statex\Comment each $y_i$ either $Y_i$ or $(1-Y_i)$, by the value for $Y_i$ in the CPT
	    	\EndFor
	    	\State Set $X$ as in~\eqref{eq:encode d}
		\EndIf
		\If{LPM of $X$ is a linear dependency for Gaussian
                  variables} Set $X$ as in~\eqref{eq:encode g}
		\EndIf
		\If{LPM of $X$ is a conditional linear Gaussian dependency} Set $X$ as in~\eqref{eq:encode clg}
		\EndIf
	\EndFor
\end{algorithmic}
\end{algorithm}
%% Line 1 (topological ordering) ensures the right order of variables in \ProbModel{}
%% Lines 3-4 as separate algorithms, for different types of dependencies

\subsection{Encoding BNs as \ProbModel{} Loops}
Section~\ref{ssec:LPMs} encoded common 
local probabilistic models of BN dependencies as
\ProbModel{} loop updates. Since BNs are DAGs, BN nodes can be ordered in
such a way that each BN node $X$ depend only on previous BN variables---
its parents $Par(X)$. Hence, BNs can be encoded as \ProbModel{} loops,
as shown in Algorithm~\ref{alg:BNtoPS} and stated below.

\begin{theorem}\label{thm:BNtoPS}
Every BN and dynBN\footnote{subject to the restriction on structure of dynBN as discussed in sectin~\ref{ssec:LPMs}}
with local probabilistic models given by CPT or
(conditional linear) Gaussian dependencies can be encoded as a
\ProbModel{}~loop.
In particular, disBNs, gBNs and clgBNs can be encoded as  \ProbModel{}~loops. 
\end{theorem}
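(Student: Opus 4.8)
The plan is to read the proof off Algorithm~\ref{alg:BNtoPS}, verifying three things: that it terminates, that the \ProbModel{} program it outputs has updates of the legal shape~\eqref{eq:ProbModel:prob_assignments}--\eqref{eq:ProbModel:det} (or of the extended form allowed by the dependency clause of Definition~\ref{def:ProbModel}), and that the order in which those updates are emitted respects the variable-ordering constraint built into Definition~\ref{def:ProbModel}. Termination is immediate: a BN is a finite DAG, and the algorithm does a bounded amount of work per node (and, for a CPT node, per line of the table). The output program is $I;\,\texttt{while(true)}\{U\}$, where $U$ is the concatenation, in the chosen node order, of the update blocks produced for the successive nodes together with their auxiliary variables, and $I$ initialises every program variable (for a static BN the initial values are irrelevant, since no node is its own parent; for a dynBN, $I$ encodes the slice-$0$ / prior network).

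For the ordering constraint I would recall that a DAG admits a topological order $X_1,\dots,X_r$ of its nodes with $Par(X_j)\subseteq\{X_1,\dots,X_{j-1}\}$; this is line~1 of Algorithm~\ref{alg:BNtoPS}. The encodings of Section~\ref{ssec:LPMs} introduce, for a node $X$, only auxiliary variables that mention $Par(X)$ --- the per-line variables $X_L$ of~\eqref{eq:encode cpt}, and the per-configuration Gaussians of~\eqref{eq:encode clg}. Inserting these auxiliaries into the variable list immediately after all of $Par(X)$ and immediately before $X$ keeps every right-hand side expressed solely in terms of strictly earlier program variables, which is exactly the dependency pattern of~\eqref{eq:ProbModel:prob_assignments}--\eqref{eq:ProbModel:det}.

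Next I would verify that each of the four local probabilistic models is encoded by legal updates. A deterministic dependency of a continuous node is the polynomial assignment $X:=Q(Par(X))$, an instance of~\eqref{eq:ProbModel:det}; for a discrete node the indicator $[X=x]$ is a polynomial in $X$, so deterministic (Boolean) dependencies are polynomial updates as well. A CPT is encoded by~\eqref{eq:encode cpt}--\eqref{eq:encode d}: each $X_L$ is the probabilistic update $X_L:=\prod_i[Y_i=y_i]\;[p_L]\;0$ whose two branches are polynomials in $Par(X)$, matching~\eqref{eq:ProbModel:prob_assignments}, and $X:=\sum_{L}X_L$ is an affine, hence polynomial, deterministic update. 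A linear Gaussian dependency~\eqref{eq:encode g} assigns $X$ a Gaussian whose mean $\alpha_X+\sum_{Y\in Par(X)}\beta_{X,Y}Y$ depends linearly on already-updated variables and whose variance is fixed; this is exactly what the dependency clause of Definition~\ref{def:ProbModel} permits, and by the simplification rule~\eqref{eq:Evar:newSimplifRules} --- proved correct in Theorem~\ref{thm:MBIs} --- it reduces to a polynomial update plus an independent centred Gaussian. A conditional linear Gaussian dependency~\eqref{eq:encode clg} is the sum $\sum_{d\in\Omega(D)}[D=d]\cdot N_d$, i.e.\ a polynomial in the discrete parents multiplied by Gaussians of the previous kind; splitting off one auxiliary variable $X_d:=[D=d]\cdot N_d$ per configuration $d$ (as with $W1\_1,W1\_2$ in Fig.~\ref{fig:rats}(B)) turns it into legal updates. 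Since disBNs use only discrete CPT/deterministic dependencies, gBNs only linear Gaussian dependencies, and clgBNs only discrete CPTs together with conditional linear Gaussian dependencies, the three named subclasses follow as special cases.

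For dynBNs I would identify the loop counter $n$ with the time slice: the intra-time-slice edges of one slice are encoded inside a single copy of $U$ exactly as above, while an inter-time-slice edge into $X$ becomes a reference, inside $U$, to a variable's value from the previous iteration --- which is precisely what a \ProbModel{} loop makes available. The two structural restrictions from Section~\ref{ssec:LPMs} are what make a single fixed body $U$ reproduce every slice: a variable may look back only one slice (so $U$ need not retain more than the previous iteration's state) and its self-dependency must be linear (so the induced update for $X$ stays inside the linear-update format). The only genuinely delicate point --- the intended main obstacle --- is the bookkeeping of auxiliary variables: one must be sure that the extra variables of~\eqref{eq:encode cpt} and~\eqref{eq:encode clg} can always be slotted between a node and its parents without creating a forward reference, which the ordering argument above handles; everything else is a direct instantiation of the encodings of Section~\ref{ssec:LPMs}.
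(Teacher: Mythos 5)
Your proof is correct and follows essentially the same route as the paper, which in fact offers no formal proof beyond the observation preceding the theorem statement: BNs are DAGs, so a topological order lets each node's update (built via the encodings \eqref{eq:encode cpt}--\eqref{eq:encode clg} of Section~\ref{ssec:LPMs} and assembled by Algorithm~\ref{alg:BNtoPS}) refer only to earlier variables. Your write-up is a faithful, more detailed elaboration of that argument, including the auxiliary-variable placement and the dynBN restrictions that the paper only mentions in passing.
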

%\begin{corollary}
% also dDBNs, gDBNs, clgDBNs with the inter-time-slice dependencies restricted to linear dependence on the variable itself.
%\end{corollary}

Based on Algorithm~\ref{alg:BNtoPS} and Theorem~\ref{thm:BNtoPS}, we
complete this section by defining the following class of BNs, in
relation to \ProbModel{} loops.

\begin{definition}[\ProbModel{} Bayesian Networks]
A \emph{\ProbModel{} Bayesian Network (PSBN)} is a BN which can be encoded as a \ProbModel{} loop.
\end{definition}

The relation and expressivity of PSBNs, and hence \ProbModel{} loops,
compared to BN variants is visualized in Fig.~\ref{fig:hierarchy}.

\begin{figure}[t!]
  \includegraphics[width=\linewidth]{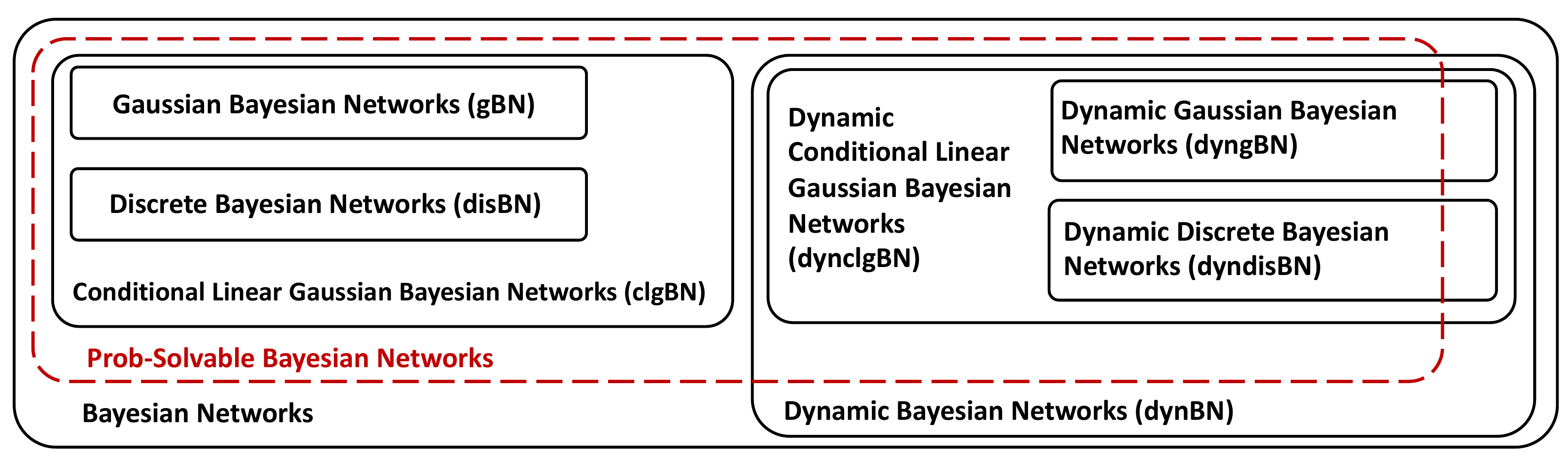}
   \vspace{-5ex}
  \caption{BN hierarchy.}
  \vspace{-5ex}
  \label{fig:hierarchy}
\end{figure}

\section{Automatic BN Analysis via \ProbModel{} Loop Reasoning}
\label{sec:BN problems}

We now show that several BN challenges can automatically solved by generating moment-based invariants of \ProbModel{} loops encoding the respective BNs.
To this end, (i) we consider exact inference, sensitivity analysis, filtering and computing the expected number of rejecting samples in sampling-based BN procedures
and (ii) formalize these BN problems as reasoning tasks within \ProbModel{} loop analysis.
We then (iii) encode BNs as \ProbModel{} loop $\mathcal{P}$ using Algorithm~\ref{alg:BNtoPS} and  (iv) generate moment-based invariants of $\mathcal{P}$ using Algorithm~\ref{algo:MBIs}.
We address steps (i)-(ii) in
Sections~\ref{subsec:infer}-\ref{subsec:sensitivity},   % and summarize our \ProbModel{} loop reasoning task for BN analysis in Figure~\ref{tab:overview}.
%Based on results of Sections~\ref{sec:probsolvable}-\ref{sec:encoding},
and report on the automation of our work in Section~\ref{sec:implement}.

\subsection{Exact Inference in BNs}
\label{subsec:infer}

%\paragraph{\textbf{BNs}}
%Given a Bayesian Network, there are various queries we would like to be able to answer.
Common queries on BN properties address (i) the probability distributions of BN nodes $X$, for example by answering whether $P(X=x)$ or $P(X<c)$; 
(ii) the conditional probabilities of BN nodes $X, Y$, such as $P(X=x | Y = y)$;  or (iii) the expected values and higher-order moments of BN nodes $X,Y$, for instance  $\E[X], \E[X^2], \E[X | Y = y]$ and $\E[X^2 | Y = y]$. Here we focus on (iii) but show that, in some BN variants, queries related to (ii) can also be solved by our work.

% BNs
\subsubsection*{Exact Inference in disBNs} 
%binary
In the case when a BN node $X$ is binary-valued, we have $\E[X] = P(X = true)$. Furthermore, for any higher-order moment of $X$ we also have $Mom_k[X] = P(X = true)$. 
For non-binary-valued but discrete BN node $X$, with values from $\{0, \dots, m\}$, the higher-order moments of $X$ are also computable. Moreover, the first $m-1$ moments are sufficient to fully specify probabilities $P(X=i)$,  for $i\in\{0, \dots, m-1\}$, as proven below. 
\begin{lemma}
\label{lemma:dRV first moments}
The higher-order moments of a discrete random variable $X$ over $\{0, \dots, m-1\}$ are specified by the  first $m-1$ higher-order moments of $X$. 
\end{lemma}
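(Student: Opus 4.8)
The plan is to show that knowing $\E[X], \E[X^2], \dots, \E[X^{m-1}]$ is equivalent to knowing the probabilities $p_i := P(X=i)$ for $i = 0, \dots, m-1$, since $p_0$ is then determined by $\sum_{i=0}^{m-1} p_i = 1$. Writing $\vec{p} = (p_1, \dots, p_{m-1})^\top$ and $\vec{\mu} = (\E[X], \dots, \E[X^{m-1}])^\top$, the defining relation $\E[X^k] = \sum_{i=0}^{m-1} i^k p_i = \sum_{i=1}^{m-1} i^k p_i$ (the $i=0$ term vanishes for $k \geq 1$) is a linear system $\vec{\mu} = V \vec{p}$, where $V$ is the $(m-1)\times(m-1)$ matrix with entries $V_{k,i} = i^k$ for $k, i \in \{1, \dots, m-1\}$.

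First I would observe that $V$ is, up to scaling each column $i$ by the factor $i$, a Vandermonde matrix in the nodes $1, 2, \dots, m-1$: indeed $V_{k,i} = i \cdot i^{k-1}$, so $V = W \cdot \mathrm{diag}(1, 2, \dots, m-1)$ where $W_{k,i} = i^{k-1}$ is the standard Vandermonde matrix on the distinct nodes $1, \dots, m-1$. Since these nodes are pairwise distinct, $\det W = \prod_{1 \le i < j \le m-1}(j - i) \neq 0$, and the diagonal factor has determinant $(m-1)! \neq 0$, so $V$ is invertible. Hence $\vec{p} = V^{-1}\vec{\mu}$ is uniquely recovered from the first $m-1$ moments, and then $p_0 = 1 - \sum_{i=1}^{m-1} p_i$. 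Conversely, any higher-order moment $\E[X^k]$ for $k \geq m$ is then $\sum_{i=0}^{m-1} i^k p_i$, a function of the $p_i$ and therefore of $\vec{\mu}$; this gives the claimed direction that all higher-order moments are specified by the first $m-1$.

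I do not expect a genuine obstacle here; the only subtlety worth stating carefully is the bookkeeping that the $i = 0$ value contributes nothing to any moment of positive order, which is exactly why $m-1$ moments (rather than $m$) suffice — the unknown $p_0$ is pinned down for free by normalization. I would also note explicitly that the argument uses only that $X$ takes values in a \emph{known} finite set $\{0, \dots, m-1\}$; the same reasoning with a general set of $m$ distinct real values $\{v_0, \dots, v_{m-1}\}$ works verbatim with the Vandermonde matrix built on those nodes, though one then needs all $m$ moments $\E[X^0], \dots, \E[X^{m-1}]$ unless one of the $v_i$ is $0$. Combined with the Remark on Bernoulli variables and Theorem~\ref{thm:MBIs} (which guarantees each $\E[X^k]$ is a computable MBI), this shows the full probability distribution of a discrete BN node is recoverable from finitely many moment-based invariants.
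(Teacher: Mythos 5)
Your proof is correct and follows essentially the same route as the paper's: both reduce the claim to unique solvability of the linear system relating the probabilities $p_0,\dots,p_{m-1}$ to the moments of order $1,\dots,m-1$ together with the normalization $\sum_i p_i = 1$. The only difference is that the paper merely asserts the resulting equations are linearly independent, whereas you actually justify nonsingularity by eliminating $p_0$ and factoring the reduced coefficient matrix as a Vandermonde matrix on the distinct nodes $1,\dots,m-1$ times an invertible diagonal matrix --- a detail the paper glosses over and which is worth having.
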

\begin{proof}
  %Let $X$ be a random variable over $\{0, \dots, m-1\}$, given by
  Let $P(X=i) = p_i$,  for $i\in\{0, \dots, m\}$. Then, $\sum_{0\le i <m} i^kp_k = Mom_k(X)$, yielding $m$ linear equations over $p_0, \cdots p_{m-1}$, with
$k\in\{1, \cdots m-1\}$. As $\sum_{0\le i <m} p_i = 1$, we have  a linear system of $m$ linearly independent equations, implying the existence of a unique solution which specifies the distribution of $X$. \qed
\end{proof}
% An alernative approach for computing $P(X=i)$ for a non-binary-valued but discrete BN node  $X$ is to introduce a new variable $Z := [X=i]$,  as presented in Section~\ref{sec:encoding}.

% conditional
For computing conditional expected values and higher-order moments, we show next that deriving $\E[X^k | D=i]$ is reduced  to the problem of computing $\frac{\E[X^k\cdot[D=i]]}{\E[[D=i]]}$. %Hence,  and compute the two moments.
\begin{lemma}
\label{conditional moments rewriting}
If $D=i$ with non-zero probability, we have  $\E[X^k | D=i]=\frac{\E[X^k\cdot[D=i]]}{\E[[D=i]]}$.
\end{lemma}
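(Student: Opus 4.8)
The plan is to unfold the definition of conditional expectation $\E[X^k \mid D=i]$ and relate the conditional probability measure $P(\cdot \mid D=i)$ to the unconditional one via the indicator $[D=i]$. First I would recall that for an event $A = \{D = i\}$ with $P(A) > 0$, the conditional expectation of any integrable random variable $Z$ is by definition $\E[Z \mid A] = \frac{1}{P(A)}\int_A Z \, dP$. Taking $Z = X^k$ and observing that $\int_A X^k \, dP = \int_\Omega X^k \cdot [D=i] \, dP = \E[X^k \cdot [D=i]]$, since the indicator $[D=i]$ is $1$ on $A$ and $0$ elsewhere, we get $\E[X^k \mid D=i] = \frac{\E[X^k \cdot [D=i]]}{P(D=i)}$.

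The remaining step is to identify the denominator $P(D=i)$ with $\E[[D=i]]$. This is immediate: $[D=i]$ is a $\{0,1\}$-valued random variable (a Bernoulli variable) that equals $1$ precisely on the event $\{D=i\}$, so $\E[[D=i]] = 1 \cdot P(D=i) + 0 \cdot P(D \neq i) = P(D=i)$. Here the hypothesis $P(D=i) > 0$ is exactly what is needed to ensure the conditional expectation is well-defined and the division is legitimate. Substituting this into the expression from the first step yields $\E[X^k \mid D=i] = \frac{\E[X^k \cdot [D=i]]}{\E[[D=i]]}$, as claimed.

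There is essentially no obstacle here: the lemma is a direct consequence of the definition of conditional expectation with respect to an event together with the elementary fact that the expectation of an indicator is the probability of the corresponding event. The only point requiring a modicum of care is making sure $X^k$ (equivalently $X^k \cdot [D=i]$) is integrable so that the manipulations are valid; for the discrete random variables $X$ considered in this section this is automatic since they take finitely many values, and in the broader \ProbModel{} loop setting the existence of all higher-order moments is guaranteed by Theorem~\ref{thm:MBIs}. I would therefore present the argument as a short two-line computation rather than anything more elaborate.
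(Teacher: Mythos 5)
Your proof is correct. You unfold the definition of conditional expectation with respect to the event $\{D=i\}$, namely $\E[Z \mid D=i] = \frac{1}{P(D=i)}\int_{\{D=i\}} Z\, dP$, and then identify $\int_{\{D=i\}} X^k\, dP$ with $\E[X^k\cdot[D=i]]$ and $P(D=i)$ with $\E[[D=i]]$. The paper instead starts from the law of total expectation (the ``partition property''), writing $\E[X^k[D=i]]$ as a sum of conditional expectations over the partition $\{D=i\}$, $\{D\neq i\}$, and then uses that $[D=i]$ equals $1$ on the first cell and $0$ on the second to collapse the sum to $\E[X^k \mid D=i]\,P(D=i)$. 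The two arguments are equivalent in substance --- both hinge on the indicator being $1$ exactly on the conditioning event and on $\E[[D=i]]=P(D=i)$ --- but yours is marginally more direct, bypassing the partition decomposition, and you also make explicit the integrability caveat that the paper leaves implicit. No gap either way.
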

\begin{proof}
By partition properties for expected values, we have 
$${\E[X^k[D=i]]} ={\E[X^k[D=i] | D=i] P(D=i)} +{\E[X^k[D\not=i] | D=i] P(D\not=i)}.$$ 
As $[D=i] = 1$ iff $D=i$, we derive ${\E[X^k[D=i] | D=i]} = {\E[X^k | D=i]}$ and $\E[X^k[D\not=i] | D=i] = 0$. Therefore, $\E[X^k | D=i] = {\E[X^k | D=i] P(D=i)}$. Since $P(D=i) \not= 0$, we conclude $\E[X^k | D=i] = \frac{\E[X^k\cdot[D=i]]}{\E[[D=i]]}$. \qed
\end{proof}

\subsubsection*{Exact Inference in gBNs}
Recall that a Gaussian distribution is specified by its first two moments, that is by its mean $\mu$ and variance $\sigma^2$. 
As all nodes in a gBN are Gaussian random variables, the first two moments of gBN nodes are sufficient to analyse gBN behaviour. Further, 
%% note: we only compute raw second moment rather than central (i.e. Var) but we have E[X^2] = E[(X-\mu)^2] + \mu^2 to preliminaries, or transformation of centre thm. && put this to Preliminaries.
%Computing the first two moments of variables in a Gaussian BN comes directly from the work on /ProbModel{}, and /Mora{}, giving us
$\E[X]$ and $\E[X^2]$ of a gBN node $X$ are computable using Algorithm~\ref{algo:MBIs}. 

% conditional EVs? 

\subsubsection*{Exact Inference in clgBNs}
% as gBNs ; ??? all guassians ???
%as in gBNs, RVs are gaussians
As continuous variables $X$ in clgBNs are Gaussian random variables, the means and variance of $X$ are also computable using Algorithm~\ref{algo:MBIs}. 
% conditianal
However, clgBNs might also include discrete variables $D$, whose (conditional) higher-order moments can be computed as in Lemmas~\ref{lemma:dRV first moments}-\ref{conditional moments rewriting}.
Further, for a continuous variable $X$ and a discrete variable $D$ in a clgBN, we have 
\[ \E[X | D=i] = \frac{\E[X^k\cdot[D=i]]}{\E[[D=i]]},  \]
allowing us, for example, to derive $\E[W2 | D = 1] = 7.25 + 0.89a$ in Fig.~\ref{fig:rats}.

% DBNs
\subsubsection*{Exact Inference in dynBNs}
%
%\paragraph{dynBNs}
%dynDBNs give rise to a certain kind of problems that are generally not considered in BNs. Though we can represent them compactly due to their structure,
As dynBNs are infinite in nature, (infinite) \ProbModel{} loops are suited to reason about dynBN inferences, such as (i) 
% The inference queries in DBNs are also slightly different.
long-term behaviour or prediction and (ii) filtering and smoothing. % and (iii)  most likely explanation.
%We commonly ask about long-term behaviour or prediction (what can we expect to happen in future time-slices, or iterations?), filtering and smoothing (what is the expected state now or in the past given some evidence), or most likely explanation (what is most likely sequence of states given some evidence?). 
A related problem is  characterizing the dynBN behaviour after $n$ iterations, and in particular for $n\to\infty$.
%Of particular interest to us is prediction, and a related problem of long-term behaviour of the DBN---how can we expect the system to behave after $n$ iterations, and what if %$n\to\infty$? 
% we do this for arbitrary $n$; unlike the common recursive Fowrward algorithm for this, which sortof 'simulate' the n iterations in O(n), we do O(1) wrt n. -  doublecheck & ref.; no evidence, but arbitraey n - can acompany other methods for that prediction part ('prediction 'as 'filtering+future' behaviour) - check, ref.

\paragraph{(i) Prediction and long-term behaviour in dynBNs}
By modeling dynBNs as \ProbModel{} loops, we can
compute/predict higher-order moments $\E[X_n^k]$  of dynBN nodes $X$ using Algorithm~\ref{algo:MBIs}, for an arbitrary $n$.
Further, thanks to the existence  of $\E[X_n^k]$ for \ProbModel{} loops, we conclude that $\lim_{n\to\infty} \E[X_n^k]$ is also computable. 
%% define c-finite expressions, or what form do MBIs take!!
Moreover,
Algorithm~\ref{algo:MBIs} computes higher-order moments/MBIs in $O(1)$ time w.r.t. $n$, which is not the case of the $O(n)$ approach of the standard Forward algorithm.

\paragraph{(ii) Filtering and prediction in dynBNs}
% one step filtration/prediction - given distribution/moments of $X_{t}$, we compute distribution/moments of $X_{t+1} | e_{t+1}$
% then do this for each evidence -> filtration
% together with 'prediction' above, we get prediction | observations
% however, this task, in this form, does not fully use the strengths of \ProbModel{}-based approach
Predicting next dynBNs states $X_{t+1}$ given all observations $e_1, \ldots, e_{t+1}$ until time $t$
can be expressed as $P(X_{t+1} | e_1, \dots, e_{t+1})$, which in turn can be rewritten using Bayes' rule under the sensor Markov assumption (the evidence $e_t$ depends only on program variables $X_t$ from the same time-slice), as follows:
\[ P(X_{t+1} | e_1, \dots, e_{t+1}) = P(e_{t+1} | X_{t+1}) \cdot \sum_{x_t} P(X_{t+1} | x_t) \cdot P(x_{t} | e_1, \dots, e_{t}), \]
where $P(e_{t+1} | X_{t+1})$ and $P(X_{t+1} | x_t)$ are specified by the BN,  assuming discrete-valued observation variables.
Filtering and prediction in dynBNs is thus computable using MBIs of \ProbModel{} loops.

\subsection{Number of BN Samples until  Positive BN Instance} % and the expected number of positive instances in N samples
\label{subsec:number of samples}

% "Given a Bayesian network with observed evidence, how long does it take in expectation to obtain a single sample that satisfies the observations?" [howlong]

%To capture probability of non-trivial events, such as $w = 1 \land r = 1$, with a new loop variable $t$, updated simply as $t = w*r$. We can then compute the expected number of loop iterations until the first sample satisfying the condition (in terms of number of loop iterations) sa $\frac{1}{t}$.

As pointed out in \cite{howlong}, an interesting question about BNs is "Given a Bayesian network with observed evidence, how long does it take in expectation to obtain a single sample that satisfies the observations?". A related, though arguably simpler, question would require giving the expected number of positive instances (samples satisfying the observation) in $N$ samples of BNs. 
% relevant with rejection sampling algorithmsa and such
Both of these questions can be answered using standard results from probability theory. 
\begin{lemma}
\label{thm:positive instances}
Given the probability $p$ of a BN observation, the expected number of positive $BN$ instances in $N$ samples is $p N$.
Further, the expected number of BN samples until  the first positive BN instance is $\frac{1}{p}$.
\end{lemma}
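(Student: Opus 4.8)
The plan is to recognize that Lemma~\ref{thm:positive instances} is, as the text itself signals, an immediate consequence of two elementary facts about the binomial and geometric distributions. First I would set up the probabilistic model carefully: each sample of the BN either satisfies the observed evidence or it does not, and these events are independent across samples with success probability $p$ (this is exactly what it means to draw i.i.d.\ samples from the joint distribution of the BN and check the observation predicate). Thus the samples form a Bernoulli process with parameter $p$.

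For the first claim, let $Z_j$ be the indicator random variable that the $j$th sample is a positive instance, so $Z_j$ is Bernoulli with $\E[Z_j] = P(Z_j = 1) = p$ (using the remark in Section~\ref{sec:prob} that all moments of a Bernoulli variable equal its probability). The number of positive instances in $N$ samples is $Z = \sum_{j=1}^{N} Z_j$, and by linearity of expectation $\E[Z] = \sum_{j=1}^N \E[Z_j] = pN$. For the second claim, let $T$ be the index of the first sample that is a positive instance; then $T$ is a geometric random variable with $P(T = \ell) = (1-p)^{\ell-1} p$ for $\ell \ge 1$. Its expectation is $\E[T] = \sum_{\ell \ge 1} \ell (1-p)^{\ell-1} p = \frac{1}{p}$, which follows from differentiating the geometric series $\sum_{\ell \ge 0} x^\ell = \frac{1}{1-x}$ to get $\sum_{\ell \ge 1} \ell x^{\ell-1} = \frac{1}{(1-x)^2}$ and substituting $x = 1-p$. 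Both computations implicitly require $p > 0$, which is guaranteed since we assume the observation has nonzero probability (otherwise no positive sample ever exists and the quantity is vacuous or infinite).

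There is no real obstacle here — the only thing worth stating explicitly is the independence/i.i.d.\ assumption underlying the sampling procedure, since that is what lets us invoke the binomial and geometric distributions; the rest is textbook probability theory, exactly as the preceding sentence in the paper ("Both of these questions can be answered using standard results from probability theory") advertises. I would keep the proof to two or three sentences: identify $Z$ as a sum of $N$ i.i.d.\ Bernoulli($p$) variables and apply linearity of expectation for the first part, then identify $T$ as Geometric($p$) and quote its mean $1/p$ for the second part.
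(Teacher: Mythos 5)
Your proposal is correct and follows essentially the same route as the paper: both model the samples as i.i.d.\ Bernoulli($p$) trials, obtain $pN$ from the binomial (you via linearity of expectation, the paper by quoting $\E[\mathrm{Binom}(N,p)]$), and obtain $1/p$ as the mean of the geometric distribution. Your explicit mention of the independence assumption and the requirement $p>0$ is a minor, welcome elaboration but not a different argument.
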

\begin{proof}
  Since every BN iteration (sample) is independent from previous ones, the occurence of positive BN instances
  can be modelled as a Bernoulli random variable,  given by the probability $p$ of positive instances in any given iteration (or sample).
The number of positive instances in $N$ samples is therefore the sum of independent, identically distributed Bernoulli random variables, parametrized by $p$, following thus a  Binomial distribution with parameters $N$ and $p$. The number of positive BN samples is thus $\E[Binom(N,p)] = p N$.
% #in N as E[sum bernoulli] due to iteration-independence; so -> E[binomial]
The expected number of BN samples until the first positive BN instance is therefore  given by the distribution of the number of Bernoulli trials needed for one success, which in turn is given by the geometric distribution $Geometric(p)$. The expected number of samples until the first positive BN instance is thus  $\E[Geometric(p)] = \frac{1}{p}$. \qed
\end{proof}

We note that Lemma~\ref{thm:positive instances} can be answered using  \ProbModel{} loop reasoning, by relying on Algorithm~\ref{algo:MBIs}, as illustrated next.

\begin{example} 
  For inferring the expected number of positive instances in $N$ samples in 
 Fig.~\ref{fig:grass}, we first encode the observation in the BN as a new variable $GR = G\cdot R$, capturing the observation that the grass is wet and there was  rain. We then transform the BN into a dynBN adding an inter-time-slice counter update $count = count + GR$. The expected number of positive instances is then the prediction $\E[count_n]$ for $n=N$. %which corresponds to prediction in DBNs 
 % <- Prob-Solvable DBN!!!, based on a dBN

 For answering the question of~\cite{howlong}, we again encode the observation first as above, e.g. $GR = G\cdot R$. 
We use a boolean variable to indicate whether there has been a positive instance 
$continue = {continue\cdot [GR=0]}$, which is initiated as $1$ (or $true$) and updated to $0$ once $GR = 1$ 
and stays $0$ thereafter. Finally, we update a loop counter as long as there was no positive instance observed with 
$count = count + continue$. The expected number of samples until the first positive instance is the long-term behaviour of 
$count$, i.e. $\lim_{n\to\infty} \E[count_n]$. % =\frac{1}{0.04b + 0.6396}
\end{example}

%This all can also be done with parametrized BNs. 

\subsection{Sensitivity Analysis in BNs}
\label{subsec:sensitivity}

As BNs rely on network parameters, %components,k parameters to give useful representation. These parameters are often given by experts, or estimated from (incomplete) data, hence they are most likely wrong (or, rather, imprecise). One can ask how important is it to get any particular parameter right.
a challenging task is to understand to what extent does a small change in a network parameter affect the outcome of particular BN query.
This task is referred to as sensitivity analysis in BNs. 
More precisely, we would like to compute 
 $P(X|e)$ and $\E[X|e]$ for a random variable $X$ and evidence $e$ as functions of a BN parameter(s) $\theta$. 
 For doing so, we note that \ProbModel{} loops may use symbolic coefficients. Thus, replacing concrete BN probabilities with symbolic parameters and solving BN queries as discussed in Section~\ref{subsec:infer}, allow us to automate sensitivity analysis in BNs by computing MBIs of the respective \ProbModel{} loops, using Algorithm~\ref{algo:MBIs}.
%Very often the sensitivity is measured using the derivative of the query with respect to the network parameter in question. Given that the answers to the queries described in this section are solutions to C-finite recurrences, the derivative exists and can be computed. 

%
%Recall that computing MBIs for \ProbModel{} loops allows us to specify parameters symbolically. 
%Replacing the concrete probabilities in the network by symbolic parameters and solving the query as described in~\ref{subsec:infer} gives us precisely what we need. 
 
 \begin{example}
 A sensitivity analysis in Fig.~\ref{fig:rats} could measure the effect of parameters of weight loss in week $1$ on the conditional expectation $\E[W2|D=1]$. That is, we compute $\E[W2|D=1]$ as a function of parameters of $W1$. 
 In this case, we introduce symbolic parameters $a$ and $b$ adjusting the parameters of weight loss in week 1 ($W1\_1$) when drug was administers. 
 Using Algorithm~\ref{algo:MBIs}, we compute the MBIs 
 $\E[W2^k\cdot D], \E[D]$, from which we have, for $k=1$, 
 $\E[W2|D=1] = \frac{\E[W2\cdot D]}{\E[D]} = 0.89a + 7.25$, 
 answering the respective sensitivity analysis of Fig.~\ref{fig:rats}.

% 
%  A sensitivity analysis in Fig.~\ref{fig:rats} could measure the effect of weight loss in week 1, by adjusting the probability of administering the test drug. That is, computing $\E[W1 | S]$ as a function of probabilities of $D$. % Drug probabilities.
%  By introducing symbolic probabilities $p,q$ for $D1$ and $D0$, updates over $D0,D1$ in Fig.~\ref{fig:rats}(B) are respectively rewritten to  $D0 := (1-S) [q] 0$ and $D2 := S [p] 0$. Using Algorithm~\ref{algo:MBIs}, we compute the MBIs $\E[W1^k\cdot S], \E[S]$, from which we have, for $k=1$, $\frac{\E[W1\cdot S]}{\E[S]} = \LK{add\frac{}{}}$,  answering the respective sensitivity analysis of Fig.~\ref{fig:rats}.
  \end{example}
% p<-.7; q<-.9
%As discussed above, we compute $\frac{\E[W_1^k\cdot S]}{\E[S]}$, yielding $\frac{}{}$. 
%% tbd

%\section{Solving \ProbModel{} Problems with BNs}

%!TEX root = main.tex

\subsection{Implementation and Experiments}\label{sec:implement}
We automated BN analysis via \ProbModel{} loop reasoning by extending
and using our
tool \Mora{}~\cite{mora}. To this end, we first manually encoded BNs as
\ProbModel{} loops using Algorithm~\ref{alg:BNtoPS}. We then extended \Mora{} to support our extended
programming model of \ProbModel{} loops and integrated 
Algorithm~\ref{algo:MBIs} within \Mora{}\footnote{\url{https://github.com/miroslav21/mora}} to generate MBIs of
\ProbModel{} loops, solving thus the BN problems of
Sections~\ref{subsec:infer}-\ref{subsec:sensitivity}. 
%BNs were encoded manually, then Moment-Based invariants computed using \Mora{}~\cite{mora}, final computation done manually. 
As benchmarks, we used $28$ BN-related problems for $6$ BNs taken
from~\cite{rats,marks,korb,asia,russell2002artificial}.
Table~\ref{tab:experiments} summarizes our experiments, with full details on the experimental data for Fig.~\ref{fig:gBN} in the Appendix~\ref{sec:app}. For each example of Table~\ref{tab:experiments},
we  list the BN queries we considered, that is probabilistic inference
(Q1),
number of BN samples (Q2) and sensitivity analysis (Q3) as introduced
in Section~\ref{sec:intro} and discussed in
Sections~\ref{subsec:infer}-\ref{subsec:sensitivity}. 
Column~3 of Table~\ref{tab:experiments}   shows the time needed by
\Mora{} to compute moment-based invariants (MBIs) solving the respective BN
problems. The last column of Table~\ref{tab:experiments} gives our 
derived solutions for the considered BN queries. 
Our experiments were run on a MacBook Pro 2017 with 2.3 GHz Intel Core i5 and 8GB RAM.

A symbolic version of gBN from Fig.~\ref{fig:gBN} was analysed in the experiments for sensitivity analysis (Table~\ref{tab:experiments}), with $\mu_{al}$ for the mean of $ALG$ variable, $\sigma_{an}$ for the variance of $ANL$, and $(0.31+c)$ for the coefficient of $ANL$ in the mean of $S$. Results were shortened in the Table~\ref{tab:experiments}. Full results are as follows:
\begin{align*}
\E[Stat^2] &= 0.9801\mu_{al}^2c^2 + 2.112462\mu_{al}^2c + 1.13827561\mu_{al}^2 - 7.0686\mu_{al}c^2 \\
  &\quad - 31.965132\mu_{al}c - 26.23869846\mu_{al} + c^2\sigma_{an} + 123.30018c^2 \\
  &\quad + 0.62c \sigma_{an} + 326.0841516c + 0.0961\sigma_{an} + 438.406319698 \\
\E[AverageMark] &= 0.33\mu_{al} c + 1.01896666666667\mu_{al}  - 1.19c - 5.2889 \\
\E[AverageMark^2] &= 0.1089\mu_{al}^2c^2 + 0.672518\mu_{al}^2c + 1.0382930677778\mu_{al}^2 \\
  &\quad - 0.7854\mu_{al}c^2 - 5.91581466666667\mu_{al}c \\
  &\quad - 10.7784256066667\mu_{al} + 0.111111111111111c^2\sigma_{an}  \\
  &\quad + 13.70002c^2 + 0.291111111111111c\sigma_{an}  + 88.4476124c \\
  &\quad  + 0.190677777777778\sigma_{an} + 162.736365699778.
\end{align*}

\begin{table}[h!]
\begin{tabular}{ll|l|l}
\hline
\rowcolor[HTML]{CBCEFB} 
{\scriptsize BN }               & {\scriptsize BN Problem }                            & {\scriptsize MBIs }                              & {\scriptsize BN Solutions }              \\ \hline
\rowcolor[HTML]{EFEFEF} 
\multicolumn{4}{l}{\cellcolor[HTML]{EFEFEF}{\color[HTML]{333333} \scriptsize{Grass -- Fig.~\ref{fig:grass} (disBN) \#nodes: 3, \#edges: 3, \#parameters: 7, \#variables in \ProbModel{} encoding: 9}}} \\ \hline
                  &    {\scriptsize Q1: $P(R|G)$  }        &     {\scriptsize $0.72s$  }    & {\scriptsize $P(R|G) = 0.8752$	}                          \\
                  &   {\scriptsize  Q2: Number of samples}          &   {\scriptsize  $1.24s$	} & {\scriptsize $\#samples = 1.37$	} \\  % J=1 and M=0
                  &   {\scriptsize  Q3: Sensitivity analysis}          &   {\scriptsize  $0.82s$	} & {\scriptsize $\frac{0.04b + 0.6396}{-0.178a + 0.04b + 0.7308}$	}        \\ \hline
\rowcolor[HTML]{EFEFEF} 
\multicolumn{4}{l}{\cellcolor[HTML]{EFEFEF}{\color[HTML]{333333} \scriptsize{Alarm~\cite{russell2002artificial} -- Fig.~\ref{fig:disBN} (disBN) \#nodes: 5, \#edges: 4, \#parameters: 10, \#variables in \ProbModel{} encoding: 13}}} \\ \hline
                  &   {\scriptsize Q1: $P(B | A )$ }                  &{\scriptsize $0.83$s }                           & {\scriptsize $P(B | A ) = 0.373551$ }                  \\
                  &   {\scriptsize Q1: $P(EQ | M)$ }                &     {\scriptsize $1.01s$ }                        &   {\scriptsize $P(EQ | M) = 0.0358809$  }                      \\
                  &    {\scriptsize Q1: $P(\lnot EQ \land \lnot B | A \land J)$ }       & {\scriptsize $1.53s$ }     &   {\scriptsize $P(\lnot EQ \land \lnot B | A \land J) = 0.396195$  }              \\
                  &    {\scriptsize Q1: $P(EQ \land \lnot B | M \land J)$  }        &     {\scriptsize $1.43s$  }    & {\scriptsize $P(EQ \land \lnot B | M \land J) = 0.175492$	}                          \\
                  &   {\scriptsize  Q2: Number of samples (for $M\land J$)}          &   {\scriptsize  $1.91s$	} & {\scriptsize $\#samples = 19.978$	}       \\  % J=1 and M=0
                  &   {\scriptsize  Q3: Sensitivity analysis (all of above)}          &   {\scriptsize  $3.36s$	} & {\scriptsize $P(B | A ) = \frac{b(0.01q + 0.94)}{-0.279bq + 0.939b + 0.289q + 0.001}, \cdots$ }                           \\ \hline
\rowcolor[HTML]{EFEFEF} 
\multicolumn{4}{l}{\cellcolor[HTML]{EFEFEF}{\color[HTML]{333333} \scriptsize{Asia~\cite{asia}  (disBN) \#nodes: 8, \#edges: 8, \#parameters: 18, \#variables in \ProbModel{} encoding: 24}}} \\ \hline
                  &   {\scriptsize Q1: $P(Asia, Lung | Dysp)$ }                  &{\scriptsize $2.25$s }           & {\scriptsize $P(Asia, Lung | Dysp ) = 0.00045596785$}                   \\
                  &   {\scriptsize Q2: Number of samples }                  &{\scriptsize $2.85$s }               & {\scriptsize $\#samples = 1818.1818$}                  \\% till asia=1 and lung =1
                  &   {\scriptsize  Q3: Sensitivity analysis}          &   {\scriptsize  $3.76s$	} & {\scriptsize $P(Asia, Lung | Dysp ) = \frac{0.192a + 0.29625b + 0.0221625}{0.992a + 0.62b + 48.6054} $}                          \\ \hline   % of the inference above
\rowcolor[HTML]{EFEFEF} 
\multicolumn{4}{l}{\cellcolor[HTML]{EFEFEF}\scriptsize{Marks~\cite{marks} -- Fig.~\ref{fig:gBN} (gBN) \#nodes: 3, \#edges: 3, \#parameters: 6, \#variables in \ProbModel{} encoding: 5-6}}                          \\ \hline
                  &  {\scriptsize Q1: Marks - expected values  }                 						&   {\scriptsize $0.05s$ }            &  {\scriptsize $\E[Stat] = 41.688, \cdots$}           \\
                  &  {\scriptsize Q3: Marks - sensitivity analysis EVs  }  & {\scriptsize $0.12s$ }   &  {\scriptsize $\E[Stat] = 0.99\mu_{al}c + 1.0669\mu_{al} - 3.57c - 12.2967, \cdots$}  \\
                  &  {\scriptsize Q1: Marks - second moments }   				&  {\scriptsize $0.11s$  }  &   {\scriptsize $\E[Stat^2] = 2035.718, \cdots$}         \\
                  &  {\scriptsize Q3: Marks - sensitivity 2nd moments}  &  {\scriptsize $0.28s$  }  &   {\scriptsize 
                  $\E[Stat^2] = 0.9801\mu_{al}^2c^2 + \cdots + 0.0961\sigma_{an} + 438.4063, \cdots$}         \\
                  %$\E[Stat^2] = 0.9801\mu_{al}^2c^2 + 2.112462\mu_{al}^2c + 1.13827561\mu_{al}^2 - 7.0686\mu_{al}c^2 - 31.965132\mu_{al}c - 26.23869846\mu_{al} + c^2\sigma_{an} + 123.30018c^2 + 0.62c \sigma_{an} + 326.0841516c + 0.0961\sigma_{an} + 438.406319698, \cdots$}         \\
                  &  {\scriptsize Q1: Average - expected values   }           	&   {\scriptsize $0.08s$ }           &  {\scriptsize $\E[AverageMark] = 46.271$}            \\
                  &  {\scriptsize Q3: Average - sensitivity EV } 	&   {\scriptsize $0.18s$ }             &  {\scriptsize $\E[AverageMark] = 0.33\mu_{al} c + 1.01897\mu_{al}  - 1.19c - \cdots$}            \\
                  &  {\scriptsize Q1: Average - second moments}  				&    {\scriptsize $0.13s$  }   	&   {\scriptsize $\E[AverageMark^2] = 2673.160$}            \\ 
                  &  {\scriptsize Q3: Average - sensitivity 2nd moment}  & {\scriptsize $0.46s$  }  & {\scriptsize 
                  $\E[AverageMark^2] = 0.1089\mu_{al}^2c^2 + \cdots + 0.190678\sigma_{an}$}            \\ \hline
                  %$\E[AverageMark^2] = 0.1089\mu_{al}^2c^2 + 0.672518\mu_{al}^2c + 1.0382930677778\mu_{al}^2 - 0.7854\mu_{al}c^2 - 5.91581466666667\mu_{al}c - 10.7784256066667\mu_{al} + 0.111111111111111c^2\sigma_{an}  + 13.70002c^2 + 0.291111111111111c\sigma_{an}  + 88.4476124c + 0.190677777777778*ssanl + 162.736365699778$}            \\ \hline
\rowcolor[HTML]{EFEFEF} 
%\multicolumn{4}{l}{\cellcolor[HTML]{EFEFEF}\scriptsize{Marks-full~\cite{marks} (gBN) \#nodes: 5, \#edges: 6, \#parameters: 10}}                          \\ \hline
%                  &  {\scriptsize Marks - EVs  }                 &   {\scriptsize $0.09s$ }                                     &                            \\
%                  &  {\scriptsize Marks - 2nd moments }   &  {\scriptsize $0.15s$  }                                     &                            \\
%                  &  {\scriptsize Average - EV   }              &   {\scriptsize $0.15s$ }                                      &                            \\
%                  &  {\scriptsize Average - 2nd moment }  &    {\scriptsize $0.32s$  }                                     &                            \\ \hline
%\rowcolor[HTML]{EFEFEF} 
\multicolumn{4}{l}{\cellcolor[HTML]{EFEFEF}\scriptsize{Rats~\cite{rats} -- Fig.~\ref{fig:rats} (clgBN) \#nodes: 4, \#edges: 4, \#parameters: 11, \#variables in \ProbModel{} encoding: 10}}           \\ \hline
                  &   {\scriptsize Q1: $\E[W2 | D]$ }                           &     {\scriptsize $0.49s$  }         &  {\scriptsize $\E[W2 | D] = 15.02$ }                           \\
                  &   {\scriptsize  Q3: $\E[W2 | D]$	sensitivity }      &    {\scriptsize $0.72s$   }          &   {\scriptsize  $\E[W2 | D] = 15.02 + 2.24a$}                          \\
                  &    {\scriptsize Q1: $\E[W2^2 | D]$ }                        &    {\scriptsize $1.05s$  }         &    {\scriptsize $\E[W2^2 | D] = 242.8356$}  \\
                  &     {\scriptsize Q3: $\E[W2^2 | D]$ sensitivity  }     &    {\scriptsize $1.35s$   }        &     {\scriptsize $\E[W2^2 | D] = 242.8356 + 5.0176b + 67.2896a + 5.0176a^2$}                        \\ \hline
\rowcolor[HTML]{EFEFEF} 
\multicolumn{4}{l}{\cellcolor[HTML]{EFEFEF}\scriptsize{Umbrella~\cite{russell2002artificial} -- Fig.~\ref{fig:dynBN} (dynBN) \#nodes: 2, \#edges: 2, \#parameters: 3, \#variables in \ProbModel{} encoding: 6}}                     \\ \hline
                  &  {\scriptsize Q1: Prediction }                            &   {\scriptsize \multirow{2}{*}{$0.56s$} }	&  {\scriptsize	$\E[R] = \frac{1}{2}((2/5)^n+1) $ } \\
                  & {\scriptsize  Q1: Long-term behaviour }            &	&   	{\scriptsize $\E[R] \to \frac{1}{2}$	as $n \to \infty$}\\
                  &  {\scriptsize Q3: Prediction - sensitivity }          & {\scriptsize \multirow{2}{*}{$1.15s$}}	& {\scriptsize $\E[R] = \frac{(r - 1)(r - 0.3)^n-0.3}{r-1.3} $}\\
                  &  {\scriptsize Q3: Long-term - sensitivity }         &	&   	 	{\scriptsize $\E[R] \to \frac{0.3}{1.3-r}$	as $n \to \infty$}\\                      
\end{tabular}
\caption{BN analysis via \ProbModel{} loop reasoning within \Mora{}\label{tab:experiments}.}
\end{table}

\begin{figure}[h]
\centering
  \includegraphics[width=0.95\linewidth]{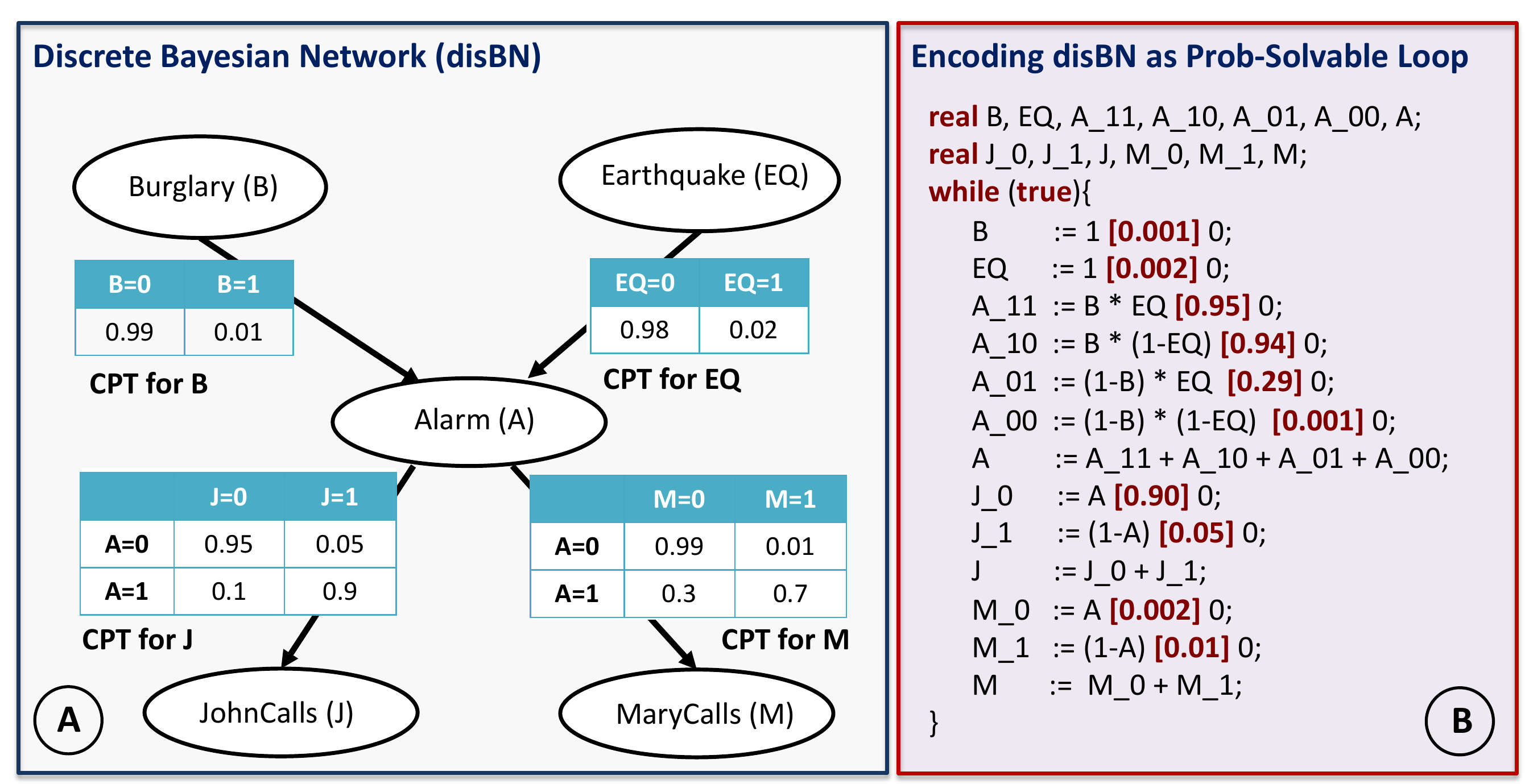}
  \vspace{-2ex}
  \caption{The discrete Bayesian Network (disBN) of
    Fig.~\ref{fig:disBN}(A) shows a burglar 
  alarm example. A burglar (B) and earthquake (EQ) 
  directly affect the probability of the Alarm (A) going 
  off, but whether or not John calls (J) or Mary calls (M)
  depends only on the alarm. A
  \ProbModel{} loop encoding for this disBN is given in Fig.~\ref{fig:disBN}(B).\label{fig:disBN}}
\end{figure}
  \vspace{-3ex}

\begin{figure}[!h]
\centering
  \includegraphics[width=0.95\linewidth]{}
  \vspace{-2ex}
  \caption{The Gaussian Bayesian Network (gBN) of
    Fig.~\ref{fig:gBN}(A)  
  describes the relationships between the marks on three math-related 
  topics. Its respective  \ProbModel{} loop encoding is given in Fig.~\ref{fig:gBN}(B).\label{fig:gBN}  }
\end{figure}

\begin{figure}[h]
  \includegraphics[width=\linewidth]{}
  \vspace{-4ex}
  \caption{In Fig.~\ref{fig:dynBN}(B) we give the 
    \ProbModel{} loop encoding of the dynamic Bayesian Network (dynBN)
    from Fig.~\ref{fig:dynBN}(A). Solutions of probabilistic
    inferences in this dynBNs are also given, by computing MBIs of Fig.~\ref{fig:dynBN}(B).\label{fig:dynBN}}
\end{figure}

\section{Related Work}

The classical approach to analyze probabilistic models is based on %to use methods and tools 
probabilistic model checking~\cite{Baier2008}. 
However, 
approaches~\cite{KwiatkowskaNP11,DehnertJK017,KatoenZHHJ11} cannot yet handle unbounded and real variables
that are required for example to encode Gaussian BNs, nor do they
support invariant generation, which is a key step in our work. 

In the context of probabilistic programs (PPs), a formal semantics for
PPs was first introduced in~\cite{Kozen81}, together with a deductive calculus to reason about 
expected running time of PPs~\cite{Kozen85}. This approach was further
refined and extended  in~\cite{McIverM05}, by introducing weakest
pre-expectations based on the
{weakest precondition} calculus of~\cite{Dijkstra75}. While~\cite{McIverM05}
%The main idea of their approach consists in annotating PPs  with 
%real-valued expressions over the expected values of program variables. 
%Their method
infers quantitative invariants only over expected values of program variables, 
our moment-based invariants yield quantitative invariants over
arbitrary higher-order moments, including expected values. Further, the
setting of~\cite{McIverM05} considers PPs where the 
stochastic inputs are restricted to discrete distributions with 
finite support. To encode Gaussian BNs it is however necessary 
to handle also continuous  distributions with infinite support, as
described in our work. 

The first semi-automatic and complete 
method synthesizing the linear quantitative invariants needed
by~\cite{McIverM05} was introduced in~\cite{Katoen2010}. To this end,
PP loops are annotated with linear template invariants and constraint
solving is used 
to find concrete values of the template parameters. Further extensions
for template-based non-linear quantitative invariant generation have
been proposed in~\cite{Chen2015,Feng2017}. A related line of 
%The work in~\cite{Katoen2010} generalizes the template-based linear invariant synthesis approach of  Col\'on et al.~\cite{ColonSS03} and recent works~\cite{Feng2017,Chen2015} support also the synthesis 
%of non-linear quantitative invariants. Another related line of
research is given in~\cite{Barthe2016}, where
martingales and user-provided hints are used to compute quantitative
invariants of PPs. 
The recent work of~\cite{Kura19} generalizes the use  of
martingales in conjunction with templates for computing higher-order moments of program
variables, with the overall goal of approximating runtimes of
randomized programs.
Unlike these works, our approach extends 
\ProbModel{} loops from\cite{probsolvable} and provides a fully
automated approach for deriving non-linear invariants 
over higher-order moments. 

Several techniques infer runtimes 
and expected values of PPs, see
e.g.~\cite{Monniaux01,Celiku2005,Hehner2011,FioritiH15,Bradzil2015}. 
To the best of our knowledge, however only~\cite{howlong} targets explicitly BNs on the
source code level, by using a weakest precondition calculus similar
to~\cite{KaminskiKMO16,McIverM05}. 
%to compute both the expected values and runtimes of a class of PP that can encode discrete BN. 
%Their method does not require templates and it is fully automatic.
The PPs addressed in~\cite{howlong} are expressed in the
\emph{Bayesian Network Language} (BNL) fragment of the
\emph{probabilistic Guarded Command Language (pGCL)}
of~\cite{McIverM05}.
The main restriction of BNL is that loops prohibit undesired 
data flow across multiple loop iterations: it is not possible to assign to a variable the value of the 
same variable or another variable at the previous iteration.
Furthermore, BNL does not natively allow to draw samples
from Gaussian distribution, allowing thus only discrete BNs to be
encoded in BNL. In contrast  to~\cite{howlong}, in our work we use 
\ProbModel{} loops, as a subclass of PPs, to allow polynomial updates
over random variables and parametric distribution, Variable updates of \ProbModel{} loops can involve coefficients
from Bernoulli, Gaussian, uniform and other distributions, whereas variable updates drawn from Gaussian 
and uniform distributions can depend on other program
  variables. 
Compared to~\cite{howlong}, we thus support reasoning about  (conditional
linear) Gaussian BNs and our PPs also allow data flow 
across loop iterations which is necessary to encode 
dynamic BNs.

\section{Conclusion}
\label{sec:conclusion}
We  extend the class of \ProbModel{} loops with variable
updates over Gaussian and uniform random variables depending on other
program variables. We show that moment-based invariants (MBIs) in
\ProbModel{} loops can always be
computed as quantitative invariants over higher-order moments of loop
variables.
We further encode BN variants as \ProbModel{} loops, allowing us to turn several BN problems into
the problem of computing MBIs of \ProbModel{} loops. In particular, we
automate the BN analysis of exact inference, sensitivity analysis, filtering and computing the expected 
number of rejecting samples in sampling-based procedures via
\ProbModel{} loop reasoning. As future work, we plan to further extend the class of \ProbModel{} loops with more complex
flow and arithmetic and address termination analysis of such loops.

\bibliographystyle{splncs04}
\bibliography{references}

\end{document}